\newtheorem{remark}{Remark}
\newtheorem{assumption}{Assumption}
\def\letters{a,b,c,d,e,f,g,h,i,j,k,l,m,n,o,p,q,r,s,t,u,v,w,x,y,z,%
	A,B,C,D,E,F,G,H,I,J,K,L,M,N,O,P,Q,R,S,T,U,V,W,X,Y,Z}
\letters \do{%
	\expandafter\edef\csname bb\@l\endcsname{\noexpand\ensuremath{\noexpand\mathbb{\@l}}}%
	\expandafter\edef\csname bf\@l\endcsname{{\noexpand\bf \@l}}%
	\expandafter\edef\csname cal\@l\endcsname{\noexpand\ensuremath{\noexpand\mathcal{\@l}}}%
	\expandafter\edef\csname eu\@l\endcsname{\noexpand\ensuremath{\noexpand\EuScript{\@l}}}%
	\expandafter\edef\csname frak\@l\endcsname{\noexpand\ensuremath{\noexpand\mathfrak{\@l}}}%
	\expandafter\edef\csname rm\@l\endcsname{{\noexpand\rm \@l}}%
	\expandafter\edef\csname scr\@l\endcsname{\noexpand\ensuremath{\noexpand\mathscr{\@l}}}%
}
\newcommand{\diag}{\operatorname{diag}}
\newcommand{\dd}{\operatorname{d}\!}
\newcommand{\isdef}{\mathrel{\mathrel{\mathop:}=}}
\newcommand{\defis}{\mathrel{=\mathrel{\mathop:}}}
\title{Quantifying uncertainty in spectral clusterings: expectations for perturbed and incomplete data}
\author{J.~D\"olz\thanks{University of Bonn, Institute for Numerical Simulation, Friedrich-Hirzebruch-Allee 7, 53115 Bonn, Germany (\email{doelz@ins.uni-bonn.de}, \email{weygandt@.uni-bonn.de}).} \and J.~Weygandt\footnotemark[1]}
\begin{document}
	\vspace*{-2eM}
	\maketitle
	
	\begin{abstract}
		Spectral clustering is a popular unsupervised learning technique which is able to partition unlabelled data into disjoint clusters of distinct shapes. However, the data under consideration are often experimental data, implying that the data is subject to measurement errors and measurements may even be lost or invalid. These uncertainties in the corrupted input data induce corresponding uncertainties in the resulting clusters, and the clusterings thus become unreliable.
		
		Modelling the uncertainties as random processes, we discuss a mathematical framework based on random set theory for the computational Monte Carlo approximation of statistically expected clusterings in case of corrupted, i.e., perturbed, incomplete, and possibly even additional, data. We propose several computationally accessible quantities of interest and analyze their consistency in the infinite data point and infinite Monte Carlo sample limit. Numerical experiments are provided to illustrate and compare the proposed quantities.
	\end{abstract}

	\section{Introduction}
	\subsection{Motivation}
	Spectral clustering \cite{Von2007} is a popular unsupervised learning technique that partitions unlabelled data points into disjoint clusters. Although many questions regarding its theoretical properties remain open, spectral clustering has gained in popularity due to its capability to recognize clusters of distinct shapes, such as entangled half circles or a point cloud within a circle, when compared to other clustering algorithms \cite{KVV2004,NJW2001,Von2007}.
	
	Unfortunately, the data under consideration are often experimental, implying that the data points on which the clustering algorithm operates are often corrupted. That is, they are affected by measurement errors. Moreover, measurements may even be lost or invalid due to errors in the data collection process. This affects the outcome of the clustering algorithm, and it is a natural question whether a given outcome is (statistically) representative.
	
	\subsection{Eulerian versus Lagrangian perspective}
	To answer whether the outcome of a clustering is representative we need to understand how clusterings on different data sets can be compared. Modelling the perturbed and incomplete data as a corrupted version of a ``true'' reference data set, essentially two approaches can be distinguished: the Eulerian perspective and the Lagrangian perspective. Within the Eulerian perspective, the corrupted data is considered within the coordinate system of the reference data set, whereas in the Lagrangian perspective, the corrupted data is considered within a mapped version of the coordinate system of the reference data set.
	
	The Lagrangian perspective is natural if a natural one-to-one correspondence between the reference data set and the corrupted data set can be established. However, establishing such a one-to-one correspondence is not always straightforward, in particular when the cardinalities of reference and corrupted data sets differ. Working in the Eulerian approach is less intuitive at first, as it requires a concept of out-of-sample evaluation to compare values of data sets. However, if made feasible, it provides a natural concept to compare data sets of different cardinality. Other than many articles in the literature, which attain a Lagrangian perspective, the approach proposed in this article attains an Eulerian perspective. This also facilitates its consistency analysis in the infinite data point limit.

	\subsection{Related work}
  Since the original introduction of spectral clustering in \cite{DH1973,Fie1973} there has been an extensive amount of research on various variants of spectral clustering, see e.g.~\cite{Von2007} for an in-depth discussion. As spectral clustering is based on the eigenpairs of the graph Laplacian on a similarity graph between the data points, one of the canonical types of analysis is to consider perturbations in the graph Laplacian matrix. Such perturbation-based perspectives are naturally linked to the Lagrangian perspective. A-priori bounds for the stability of spectral clusterings based on spectral gaps have been derived in several articles using matrix perturbation theory \cite{DH1973,NJW2001,PY2020}. A refined analysis based on matrix nearness problems was considered in \cite{AEGL2021,GS2024}. Empirical studies were carried out in \cite{GM1998,HYTJ2008}. In terms of uncertainty quantification and statistical quantities of interest, a Bayesian perturbation model was developed in \cite{DMD2023} and fuzzy spectral clusterings were considered in \cite{RW2013}. An applied problem considering trajectory clustering in ocean dynamics was studied in \cite{VRA2020}. Working within the Lagrangian perspective, the commonality of these approaches is that the cardinality of data sets is required to remain fixed, and incomplete or additional data are difficult to incorporate.
  
  To the best of our knowledge, the only work considering uncertainties in the Eulerian perspective for spectral clustering is \cite{BLSZ2018}. This article considers a semi-supervised setting, where the clustering is known for a subset of the data points and the transfer uncertainty to other data points is to be quantified. Our article applies to the unsupervised setting, i.e., no reference data is available.
  
  We conclude our literature review by noting that the uncertainty quantification of eigenvalue problems itself has gained increasing interest recently \cite{AS2012,CL2024,DE2024,DESZ2024,GGK+2019a,GSH2023}, and that the concepts there naturally transfer to UQ approaches of spectral clustering in the Lagrangian perspective.

	\subsection{Contributions}
	In this article we develop and analyze algorithms to compute expectations of spectral clusterings arising from corrupted data. The corruption can be due to data perturbation or due to missing data, for which we adopt an Eulerian perspective. Specifically:
	\begin{enumerate}
		\item We discuss how the framework of the consistency analysis for spectral clustering from \cite{VBB2008} can be used to adopt an Eulerian perspective to spectral clustering for corrupted data.
		\item We consider the arising clusterings as set-valued random variables, and consider several notions of expectations and their Monte Carlo approximation. We propose a new notion for the specific case of spectral clustering.
		\item We provide a consistency analysis for Gaussian similarity measures for the infinite data and infinite Monte Carlo sample limits, showing that the order in which the limit is taken is irrelevant.
	\end{enumerate}
	These contributions are complemented by numerical examples.
	
	\subsection{Outline}
	The rest of the article is organized as follows. In \cref{sec:preliminaries} we recall the basics of spectral clustering, a continuous analogon, and discuss how spectral clusterings on corrupted data sets can be related to each other. \Cref{sec:expectations} discusses details on the modelling of corrupted data, and relates spectral clusterings on corrupted data to several notions of expectations from random set theory. We further propose a new expectation, which is simple to compute and specific to spectral clustering.  In \cref{sec:mc} we comment on Monte Carlo methods for the approximation of these expectations, whereas the consistency of the estimators in the infinite data point and Monte Carlo sample limit is analyzed in \cref{sec:consistency}. Numerical experiments are provided in \cref{sec:experiments}, before we draw our conclusions in \cref{sec:concl}.
	
	\section{Spectral clustering on corrupted data sets}\label{sec:preliminaries}
	\subsection{Spectral clustering}\label{sec:spectralclustering}
  We recall the basics of (normalized) spectral clustering along the lines of \cite{VBB2008} and refer to \cite{NJW2001,Von2007} for a more detailed introduction. We assume that we are given a finite set of samples $X\subset\calX$ with the data space $\calX$ being a metric space.
	Moreover, we assume to have a notion of similarity to our disposal given through a symmetric, positive semidefinite, continuous \emph{similarity function} $k\colon \calX\times \calX\to\bbR$.
	
	Defining the \emph{similarity matrix} $\bfK_X$ and the \emph{degree matrix} $\bfD_X$ through
	\[
	\bfK_X=[k(x,y)]_{x,y\in X},
	\qquad
	\bfD_X=\diag_{x\in X}\bigg(\sum_{y\in X}k(x,y)\bigg),
	\]
	allows to introduce the (symmetric) normalized \emph{graph Laplacian}
	\[
	\bfL_X=\bfI_{|X|}-\bfD_X^{-1/2}\bfK_X\bfD_X^{-1/2}.
	\]
	It is well known that $\bfL_X$ has non-negative, real eigenvalues $0=\lambda_1\leq \lambda_2\leq\ldots\leq\lambda_{|X|}$. With the graph Laplacian available, a simplistic version of spectral clustering reads as follows:
	\begin{algorithm}[H]
		\caption{Spectral bi-clustering of a data set $X$.}
		\label{alg:spectralclustering}
		\begin{algorithmic}
			\State Given a set of samples $X\subset\calX$, $|X|<\infty$:
			\State Compute the graph Laplacian $\bfL_X$.
			\State Compute the eigenvector $\bfv=[v_x]_{x\in X}$ of the second smallest eigenvalue of $\bfL_X$.
			\State Define clusters $A=\{x\in X\colon v_x\geq0\}$ and $\overline{A}=\{x\in X\colon v_x<0\}$.
		\end{algorithmic}
	\end{algorithm}
	For an illustrative explanation why this approach is reasonable and for extensions to more than two clusters we refer to \cite{Von2007}. Although many of the following elaborations can be extended to more than two clusters, we prefer to stick with the bi-clustering case to simplify exposition.
	
	\subsection{Continuous spectral partitioning}\label{eq:continuouspartitioning}
	One of the challenges for the Eulerian approach we aim for, and for spectral clustering in general, is that it is rather difficult to compare the clustering process for different numbers of samples. To overcome this restriction we will need out of sample extensions of the clusterings. To this end, we follow the approach of \cite{VBB2008} to define the \emph{sample measure}
	\[
	\bbP_X=\frac{1}{|X|}\sum_{x\in X}\delta_{x},
	\]
	the \emph{degree functions}
	\[
	d_X(x)=\int_\calX k(x,y)\dd\bbP_X(y),
	\]
	the \emph{normalized similarity functions}
	\[
	h_X(x,y)=\frac{k(x,y)}{\sqrt{d_X(x)d_X(y)}},\]
	and the (compact) integral operators
	\[
	\calT_X\colon \calC(\calX)\to \calC(\calX),\qquad \calT_Xf(x)=\int_\calX h_X(x,y)f(y)\dd\bbP_X(y),
	\]
	to define
	\begin{align}\label{eq:discreteIO}
		\calU_X&=I-\calT_X.
	\end{align}
	It is straightforward to check that the graph Laplacian and $\calU_X$ are related due to
	\[
	\bfL_X\circ\rho_X=\rho_X\circ \calU_X,
	\]
	where
	\[
	\rho_X\colon \calC(\calX)\to\bbR^{|X|},\qquad
	f\mapsto[f(x)]_{x\in X},
	\]
	is the \emph{evaluation operator}. The following connections between the eigenvectors of $\bfL_X$ and the eigenfunctions of $\calU_X$ can be stated.
	\begin{theorem}[{\cite[Proposition 9]{VBB2008}}]\label{thm:VBB2008prop9}
		\begin{enumerate}
			\item If $(f_X,\lambda_X)\in \calC(\calX)\times\bbR$ is an eigenpair of $\calU_X$, then $(\bfv_X=\rho_Xf_X,\lambda)\in\bbR^{|X|}\times\bbR$ is an eigenpair of $\bfL_X$.
			\item Let $(f_X,\lambda_X)\in \calC(\calX)\times\bbR$ be an eigenpair of $\calU_X$ with $\lambda_X\neq 1$  and $[v_y]_{y\in X}=\rho_Xf_X\in\bbR^{|X|}$. Then $f_X$ has the representation
			\begin{equation}\label{eq:VBB2008eigrep}
				f_X(x)=\frac{1}{1-\lambda}\frac{1}{|X|}\sum_{y\in X}k(x,y)v_y,\qquad x\in\calX.
			\end{equation}
			\item If $(\bfv_X=\rho_Xf_X,\lambda)\in\bbR^{|X|}\times\bbR$ is an eigenpair of $\bfL_X$ with eigenvalue $\lambda_X\neq 1$, then $f_X$ as given in \cref{eq:VBB2008eigrep} is an eigenfunction of $\calU_X$ with eigenvalue $\lambda_X$.
		\end{enumerate}
	\end{theorem}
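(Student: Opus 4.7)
The plan is to use the intertwining identity $\bfL_X \circ \rho_X = \rho_X \circ \calU_X$, stated just before the theorem, as the workhorse for parts (1) and (2), and then to reverse-engineer the argument for part (3). First I would verify the intertwining itself by a direct row-wise computation: for $f \in \calC(\calX)$ and $x \in X$, one has $(\rho_X \calT_X f)_x = \frac{1}{|X|} \sum_{y \in X} h_X(x,y) f(y)$, and comparison with $\bfD_X^{-1/2} \bfK_X \bfD_X^{-1/2} \rho_X f$ shows equality entry by entry. Part (1) is then immediate: apply $\rho_X$ to $\calU_X f_X = \lambda_X f_X$ and substitute the intertwining to obtain $\bfL_X (\rho_X f_X) = \lambda_X (\rho_X f_X)$. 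For part (2), the hypothesis $\lambda_X \neq 1$ allows rearranging the eigen-equation $(I - \calT_X) f_X = \lambda_X f_X$ as $f_X = (1 - \lambda_X)^{-1} \calT_X f_X$, and unfolding $\calT_X$ against the empirical measure $\bbP_X$ yields the finite-sum representation \eqref{eq:VBB2008eigrep}, with the degree normalization from $h_X$ absorbed into the coefficients $v_y = (\rho_X f_X)_y$.

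Part (3) is the converse and is the genuinely non-trivial direction. Given an eigenpair $(\bfv_X, \lambda_X)$ of $\bfL_X$ with $\lambda_X \neq 1$, I would \emph{define} $f_X \in \calC(\calX)$ by the explicit formula \eqref{eq:VBB2008eigrep}; it is continuous because $k$ is. Two things then need to be verified: (a) the restriction $\rho_X f_X$ equals the prescribed vector $\bfv_X$, so that the constructed continuous function genuinely extends the discrete eigenvector; and (b) $\calU_X f_X = \lambda_X f_X$ as an identity in $\calC(\calX)$. Statement (b) follows by substituting \eqref{eq:VBB2008eigrep} into $\calU_X = I - \calT_X$ and reusing (a) together with the definitions of $\calT_X$ and $h_X$, which turn both sides into the same finite sum after a short manipulation.

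The main obstacle is verifying (a). The formula \eqref{eq:VBB2008eigrep} defines $f_X$ pointwise on \emph{all} of $\calX$, so there is no a priori reason its restriction to the sample points should recover the prescribed entries $v_x$. This is precisely the point at which the eigenvalue equation for $\bfL_X$ must be invoked: rewriting $\bfL_X \bfv_X = \lambda_X \bfv_X$ as $(1 - \lambda_X) \bfv_X = \bfD_X^{-1/2} \bfK_X \bfD_X^{-1/2} \bfv_X$ and reading this identity entry by entry reproduces, after absorbing the degree normalization, the value $f_X(x)$ obtained by plugging $x \in X$ into \eqref{eq:VBB2008eigrep}. Once this consistency check succeeds, the remainder of part (3) is mechanical, and collecting the three pieces completes the proof.
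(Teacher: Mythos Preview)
The paper does not supply its own proof of this theorem; it is quoted as \cite[Proposition~9]{VBB2008} without argument, so there is nothing in the present paper to compare your proposal against. Your overall strategy---use the intertwining relation for part~(1), invert $I-\calT_X$ for part~(2), and for part~(3) define $f_X$ by the formula, check consistency on the sample points via the matrix eigen-equation, and then verify $\calU_X f_X=\lambda_X f_X$---is the natural one and is essentially what the original reference does.

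One substantive gap: in part~(2) you assert that ``the degree normalization from $h_X$ [is] absorbed into the coefficients $v_y=(\rho_X f_X)_y$.'' This does not hold as written. Unfolding $\calT_X f_X(x)$ against $\bbP_X$ gives
\[
\frac{1}{|X|}\sum_{y\in X}\frac{k(x,y)}{\sqrt{d_X(x)\,d_X(y)}}\,v_y,
\]
and the factor $1/\sqrt{d_X(x)}$ depends on the evaluation point $x$, so it cannot be folded into $y$-indexed coefficients. The same mismatch reappears in your consistency check~(a) for part~(3): the entrywise form of $(1-\lambda_X)\bfv_X=\bfD_X^{-1/2}\bfK_X\bfD_X^{-1/2}\bfv_X$ carries degree weights on both sides, which do not match \eqref{eq:VBB2008eigrep} with the bare kernel $k$. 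Either the formula \eqref{eq:VBB2008eigrep} is intended with $h_X$ in place of $k$, or the coefficients there should be $(\bfD_X^{-1/2}\bfv_X)_y$ rather than $(\rho_X f_X)_y$; under either reading your argument goes through cleanly, but you should make the degree bookkeeping explicit rather than wave it away.
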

	The special role of $1$ as an eigenvalue becomes apparent upon noting that $(f_X,1)$ is an eigenpair of $\calU_X$ if and only if $(f,0)$ is an eigenpair of the finite rank operator $\calT_X$.
	
	The significance of the theorem is that it allows to derive a sample dependent, i.e., $X$-dependent, bi-partitioning of $\calX$ as outlined in \cref{alg:spectralbipartitioning}.
	\begin{algorithm}[H]
		\caption{Spectral data-dependent bi-partitioning of data space $\calX$.}
		\label{alg:spectralbipartitioning}
		\begin{algorithmic}
			\State Given a set of samples $X\subset\calX$, $|X|<\infty$:
			\State Compute the graph Laplacian $\bfL_X$.
			\State Compute the eigenvector $\bfv=[v_x]_{x\in X}$ of the second smallest eigenvalue of $\bfL_X$.
			\State Define $f\colon\calX\to\bbR$ as in \cref{eq:VBB2008eigrep}.
			\State Define partitions $B_X=\{x\in\calX\colon f(x)\geq0\}$ and $\overline{B_X}=\{x\in \calX\colon f(x)<0\}$.
		\end{algorithmic}
	\end{algorithm}
	
	\subsection{Relating spectral clusterings on the same data set}\label{sec:samedataset}
	In the following, given two distinct clusterings $A_1,\overline{A_1}$ and $A_2,\overline{A_2}$ of a data set $X$, we will need an automated procedure to find a one-to-one correspondence between the two clusterings. That is, we need to determine whether $A_1$ contains approximately the same data points as $A_2$ or whether it contains approximately the same data points as $\overline{A_2}$ and similarly for $\overline{A_1}$. To this end, it is a fortunate coincidence that the clusterings are uniquely determined by two eigenvectors $\bfv_1,\bfv_2$ of two distinct graph Laplacians which are both elements of $\bbR^{|X|}$. A one-to-one correspondence of the clusterings is then given by the directions in which $\bfv_1$ and $\bfv_2$ are pointing. If they are approximately pointing in the same direction then $A_1$ corresponds to $A_2$. If they are pointing in approximate opposite directions, then $A_1$ corresponds to $\overline{A_2}$. Mathematically, this can be expressed in terms of the scalar product in $\bbR^{|X|}$ as
	\[
	\langle\bfv_1,\bfv_2\rangle_{\bbR^{|X|}}
	\begin{cases}
		\geq 0&\text{relate $A_1$ to $A_2$ and $\overline{A_1}$ to $\overline{A_2}$},\\
		<0&\text{relate $A_1$ to $\overline{A_2}$ and $\overline{A_1}$ to $A_2$}.
	\end{cases}
	\]
	We note that in practice $\bfv_1$ and $\bfv_2$ are usually normalized to the Euclidean unit length by the used eigensolver (if not, this can be done a-posteriori without changing the result of the clustering), such that $\langle\bfv_1,\bfv_2\rangle_{\bbR^{|X|}}$ will only be close to zero if $\bfv_1$ and $\bfv_2$ are close to orthogonal. In this case the two clusterings are hardly comparable. Thus, in practice, we may issue a warning if $|\langle\bfv_1,\bfv_2\rangle_{\bbR^{|X|}}|$ is below a given tolerance. A procedure to align multiple eigenvectors has been discussed in \cite[Section 3.3]{DE2024}.
	
	We close the subsection by remarking that the coefficients $a_n$ in \cref{thm:VBB2008thm15} can be chosen as $a_n=\operatorname{sign}(\langle\bfv_1,\bfv_2\rangle_{\bbR^{|X|}})$.

	\subsection{Relating spectral clusterings on different data sets}
	\Cref{eq:VBB2008eigrep} yields an immediate continuation of the eigenvectors of the graph Laplacian from the samples onto the data space $\calX$ and \cref{alg:spectralbipartitioning} exploits this fact by considering this continuation as a level set function to derive a bi-partitioning of the data space $\calX$. The advantage of the bi-partitioning approach of $\calX$ compared to the bi-clustering of $X$ is that a comparison of level-set functions and level sets is mathematically feasible. That is, given two data sets of finite cardinality $X_1,X_2\subset\calX$ we thus need to relate
	\[
	f_{X_1}
	\quad\text{and}\quad
	f_{X_2}
	\qquad
	\text{given through}~\cref{eq:VBB2008eigrep},
	\]
	or
	\[
	\{B_{X_1},\overline{B_{X_1}}\}
	\quad\text{and}\quad
	\{B_{X_2},\overline{B_{X_2}}\}
	\qquad
	\text{given through}~\cref{alg:spectralbipartitioning}.
	\]
	As the comparison of these quantities is computationally infeasible on data spaces $\calX$ of infinite cardinality, we need to restrict our comparison to a discretized version. The data sets used on which the clustering is performed are immediate candidates for discretization. To this end, with a slight abuse of notation, we introduce the clustering of $X_1$ subject to the data set $X_2$ as
	\begin{equation}\label{eq:biclusteringX1X2}
		\begin{aligned}
			A_{X_1|X_2}&=\rho_{X_1}B_{X_2}=\{x\in X_1\colon (\rho_{X_1}f_{X_2})_x\geq 0\},\\
			\overline{A_{X_1|X_2}}&=\rho_{X_1}\overline{B_{X_2}}=\{x\in X_1\colon (\rho_{X_1}f_{X_2})_x<0\}.
		\end{aligned}
	\end{equation}
	\Cref{thm:VBB2008prop9} immediately implies that $A_{X_1|X_1}=A_{X_1}$ and $\overline{A_{X_1|X_1}}=\overline{A_{X_1}}$. Moreover, on a computational level, the vector $(\rho_{X_1}f_{X_2})_{x\in X_1}$, in \cref{eq:biclusteringX1X2} can be written as a vector
	\begin{equation}\label{eq:efevalX1}
		[(\rho_{X_1}f_{X_2})_x]_{x\in X_1}
		=
		\bfv_{X_1}(X_2)
		=
		\frac{1}{1-\lambda}
		\frac{1}{|X_1|}
		\bfM_{X_1,X_2}\bfv_{X_2}
		=
		\rho_{X_1}f_{X_2},
	\end{equation}
	with a possibly rectangular matrix
	\[
	\bfM_{X_1,X_2}
	=
	[k(x,y)]_{x\in X_1,y\in X_2}.
	\]
	As such, the bi-clustering \cref{eq:biclusteringX1X2} can be computed in closed form.
	
	\section{Expectations for spectral clustering}\label{sec:expectations}
	
	\subsection{Modelling of uncertainties}
	In practice, it may happen that the samples of a data set $X\subset\calX$ themselves are subject to some measurement errors and thus need to be considered as a random deviation of some true value. Moreover, it may happen that the data set suffers from wrong or missing samples. In the following we will model these variations and missing data by considering random perturbations due to additive noise. To this end, let $(\Omega,\Sigma,\bbP_\Omega)$ be a probability space, $\widetilde{X}(\omega)\subset X$ a random subset and
	\begin{align*}
		\pi_1(\omega)=\{\varepsilon(x,\omega)\colon x\in\widetilde{X}(\omega)\}.
	\end{align*}
	The distribution of the measurement noise $\varepsilon\colon X\times\Omega\to X$ and the subset may be modelled according to statistical errors during the data collection process. We further assume that there is a set $\pi_2(\omega)\subset\calX\setminus\pi_1(\omega)$, of random cardinality $|\pi_2(\omega)|<\infty$, of additional data points whose elements follow again a probability measure according to the data collection process. For notational convenience we write
	\[
	\pi(\omega)=\pi_1(\omega)\cup\pi_2(\omega),
	\]
	see also \cref{fig:pi} for an illustration.
	\begin{figure}
		\centering
		\begin{minipage}{.4\linewidth}
			\centering
			reference data set
			
			\smallskip
			\includegraphics[width=\linewidth,clip=true,trim=280 80 280 80]{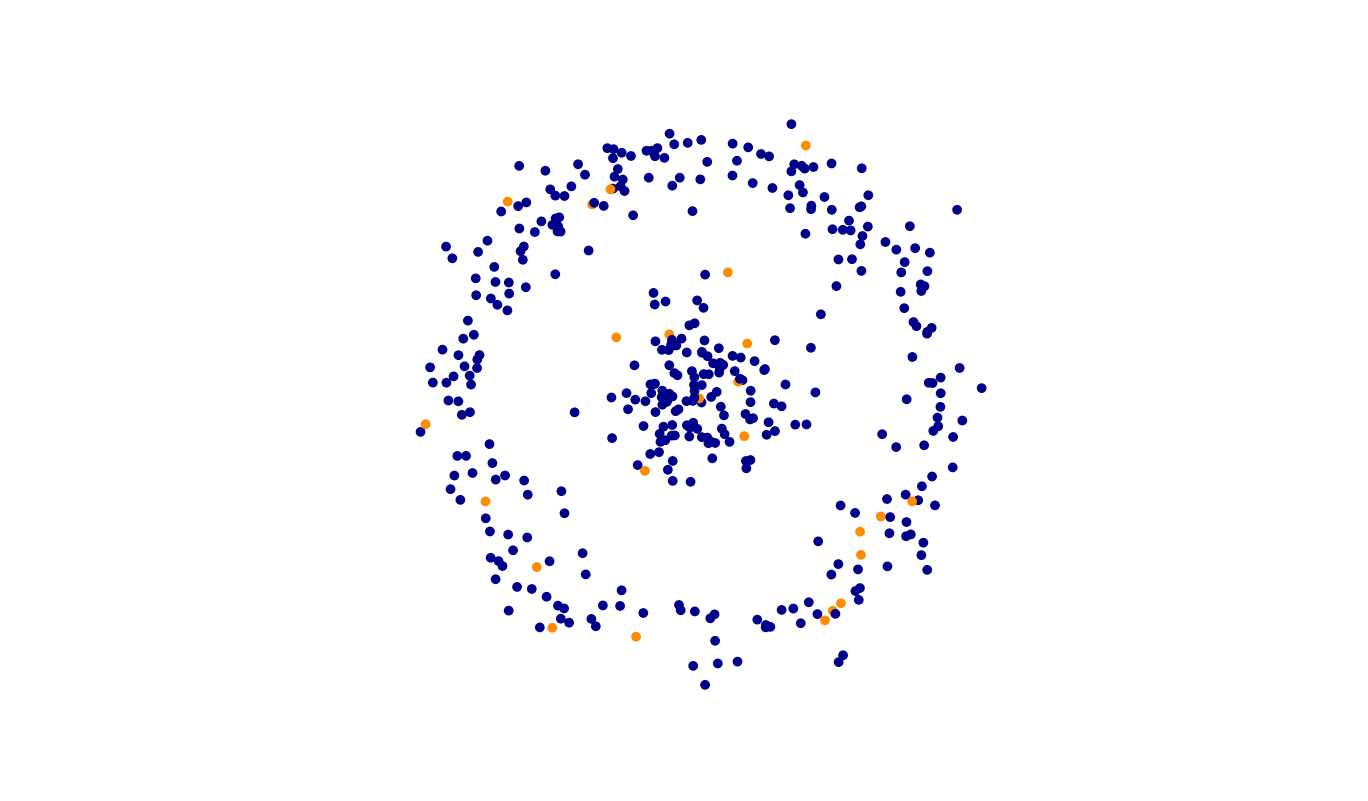}
		\end{minipage}
		\qquad
		\begin{minipage}{.4\linewidth}
			\centering
			corrupted data set
			
			\smallskip
			\includegraphics[width=\linewidth,clip=true,trim=280 80 280 80]{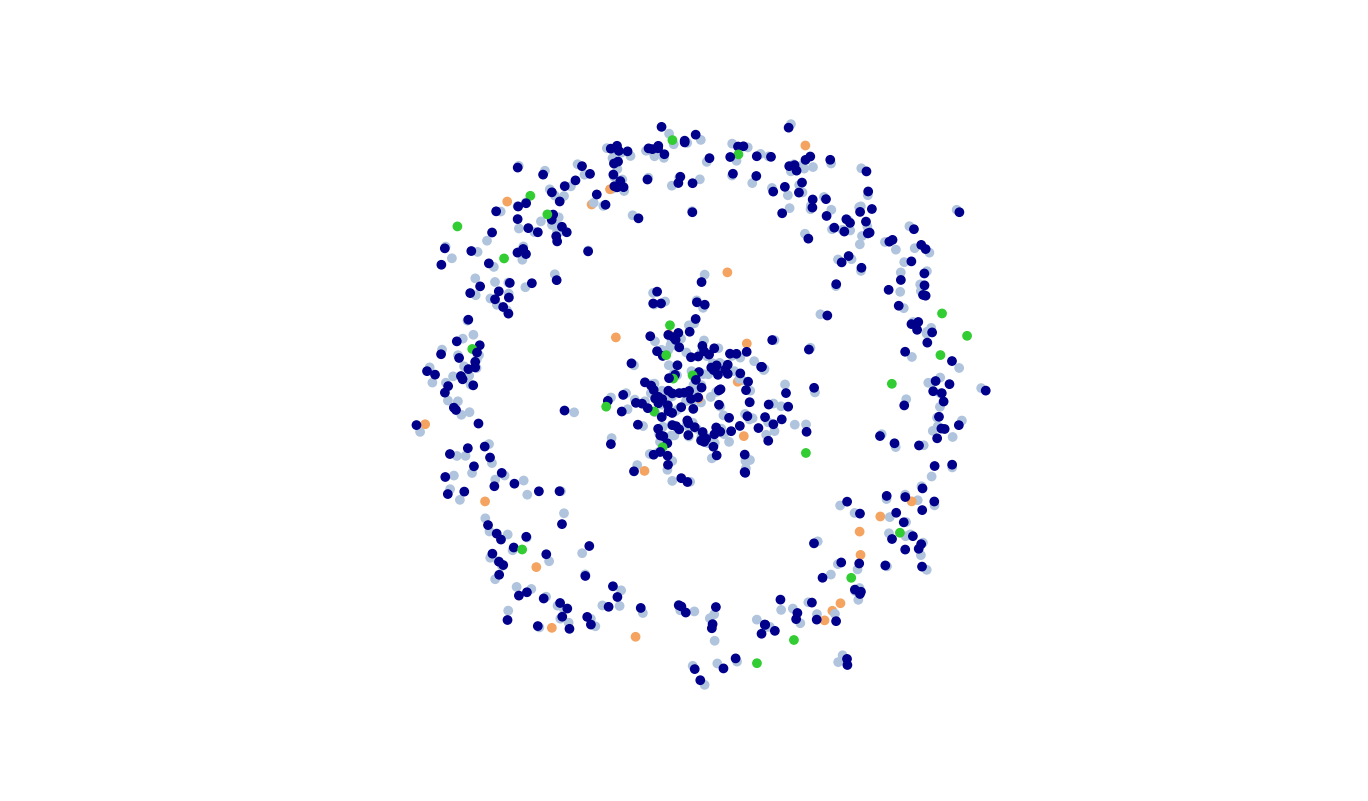}
		\end{minipage}
	
		\caption{\label{fig:pi}Illustration of the reference data set $X$ (left), with the orange data points vanishing after perturbation, and the corrupted data set $\pi(\omega)=\pi_1(\omega)\cup\pi_2(\omega)$ (right). Here $\pi_1(\omega)$ are the perturbed, non-vanishing points from $X$ (blue) and $\pi_2(\omega)$ is a set of random additional data (green). The illustration of the corrupted data set on the right shows the reference data set in transparent colors.}
	\end{figure}
		Using the same notion of similarity induced by the kernel $k$ leads to a random graph Laplacian
	\[
	\bfL_{\pi(\omega)}=\bfI_{|\pi(\omega)|}-\bfD_{\pi(\omega)}^{-1/2}\bfK_{\pi(\omega)}\bfD_{\pi(\omega)}^{-1/2}
	\]
	which can be computed for all samples, allowing to compute the eigenvector $\bfv_{\pi(\omega)}$ of the second smallest eigenvalue of $\bfL_{\pi(\omega)}$ and thus a spectral bi-clustering of $\pi(\omega)$. This random spectral clustering can be related to the clustering of $X$ through the relations \cref{eq:biclusteringX1X2} to obtain a random clustering of $X$ given through 
	\[
	A_{X|\pi(\omega)}
	\qquad
	\text{and}
	\qquad
	\overline{A_{X|\pi(\omega)}}.
	\]
	
	\subsection{Uncertainty perspective and gauging}
	In the following, we aim to quantify the uncertainty in the clustering by means of statistical quantities of interest. To this end, two perspectives emerge.
	\begin{description}
		\item[The data perspective:] The first and simpler perspective is to consider the belonging of a given data point to a given cluster $A_{X|\pi(\omega)}$.
		This makes the clustering at a given point a random variable which takes values in $\{0,1\}$.
		\item[The clustering perspective:] The second perspective is to consider the clustering itself as a set-valued random variable. This makes the clustering a random variable which takes values in the power set $\calP(X)$ of $X$.
	\end{description}
	Unfortunately, both perspectives rely on the mapping $\Omega\to\{A_{X|\pi(\omega)},\overline{A_{X|\pi(\omega)}}\}$, which is \emph{not} well defined. The reason is that the definition of the sets $A_{X|\pi(\omega)}$ and $\overline{A_{X|\pi(\omega)}}$ depends on the sign of an eigenvector or -function, cf.\ \cref{eq:biclusteringX1X2}, which is ambiguous. To make the random variables well defined we need to \emph{gauge} the sign of the eigenvector to a reference vector by proceeding as in \cref{sec:samedataset}. That is, we pick an arbitrary but ``reasonable'' $\omega^\star\in\Omega$ for which we can compute $\bfv_{X_1}(\pi(\omega^\star))$ as in \cref{eq:efevalX1} and \emph{choose} for each $\omega\in\Omega$, $\omega\neq\omega^\star$, the sign of $\bfv_{X_1}(\pi(\omega))$ such that $\langle \bfv_{X_1}(\pi(\omega)),\bfv_{X_1}(\pi(\omega^\star))\rangle_{\bbR^{|X|}}>0$. By ``reasonable'' we mean that we require that this procedure can be performed for $\bbP_\Omega$-almost all $\omega\in\Omega$. Of course, the existence of such an $\omega^\star$ is an assumption.
	\begin{definition}
		Let $(\Omega,\Sigma,\bbP_\Omega)$ be a probability space and let $\rho_Xf_{\pi(\cdot)}\colon\Omega\to\bbR^{|X|}$ be defined through \cref{eq:efevalX1} be a measurable random variable which satisfies
		\[
		\langle\rho_Xf_{\pi(\omega^\star)},\rho_Xf_{\pi(\omega)}\rangle_{\bbR^d}>0\qquad\text{for $\bbP_\Omega$-almost all}~\omega\in\Omega,
		\]
		for some fixed $\omega^\star\in\Omega$. Then we say that the clusters $A_{X|\pi(\omega)}$ and $\overline{A_{X|\pi(\omega)}}$ defined as in \cref{eq:biclusteringX1X2} are \emph{gauged} for $\bbP_\Omega$-almost all $\omega\in\Omega$.
	\end{definition}
	Having gauged clusters $A_{X|\pi(\omega)}$ and $\overline{A_{X|\pi(\omega)}}$ implies immediately that the mappings
	\begin{align*}
	\calC&\colon\Omega\to\calP(X),\hspace*{-2cm}&\omega&\mapsto A_{X|\pi(\omega)},\\
	\overline{\calC}&\colon\Omega\to\calP(X),\hspace*{-2cm}&\qquad\omega&\mapsto\overline{A_{X|\pi(\omega)}},
	\end{align*}
	are well defined and measurable mappings attaining closed subsets of $X$ as values. As we will see in the following, this allows to define statistical quantities of interest of random uncertainties in the clustering from the data perspective in a rather straightforward fashion. For statistical quantities of interest from the clustering perspective, we note that $\calC$ and $\overline{\calC}$ are \emph{random closed sets (RACS)}. The later is still an active field of research and we refer to \cite{Mol2017} for an adequate overview.
	
	In the following, we will build on some of the available notions of the RACS literature to define statistical quantities of interests. For brevity, we will restrict our considerations to $A_{X|\pi(\cdot)}$ and keep in mind that the same considerations also hold for $\overline{A_{X|\pi(\cdot)}}$.
	
	\subsection{Data perspective: the coverage function}
	As one of the canonical ways to asses uncertainties in clustering we consider the probability that a given point in the data set is contained in a given cluster. This \emph{coverage function} is given by
	\begin{equation}\label{eq:coveragefunction}
		\bbE[\mathbbm{1}_{A_{X|\pi(\cdot)}}]
		\colon
		X\to[0,1]^{|X|},
	\end{equation}
	where
	\[
	\mathbbm{1}_{C}(x)
	=
	\begin{cases}
		1,&x\in C,\\
		0,&x\notin C,
	\end{cases}
	\]
	is the indicator function and
	\[
	\bbE[\mathbbm{1}_{A_{X|\pi(\cdot)}}](x)
	\isdef
	\bbE[\mathbbm{1}_{A_{X|\pi(\cdot)}}(x)]
	=
	\int_\Omega\mathbbm{1}_{A_{X|\pi(\omega)}}(x)\dd\bbP_\Omega(\omega)
	=
	\bbP_\Omega\big(x\in A_{X|\pi(\cdot)}\big).
	\]
	
	\subsection{Data perspective: (expected) misclustering rate}
	A further way to asses uncertainties in clustering is to consider how many data points have changed their affiliation from $X$ to $\pi(\omega)$, i.e., how many data poins are \emph{misclustered}. In the language of set theory, this is precisely the cardinality of $A_X\triangle A_{X|\pi(\omega)}$, where $A\triangle B=(A\setminus B)\cup(B\setminus A)$ denotes the \emph{symmetric difference of two sets} $A,B$. Thus, we can compute the \emph{expected misclustering rate} which is given as
	\begin{equation}\label{eq:expmisclust}
		\bbE\Big[\big|A_X\triangle A_{X|\pi(\cdot)}\big|\Big]
		\in
		\bbR_{\geq 0}.
	\end{equation}
	
	\subsection{Cluster perspective: Vorob'ev expectation}
	Directly related to the coverage function from \cref{eq:coveragefunction} and the expected misclustering rate \cref{eq:expmisclust} is the Vorob'ev expectation \cite{Vor1984}. To this end, we define the level sets
	\[
	\{\bbE[\mathbbm{1}_{A_{X|\pi(\cdot)}}]\geq t\}
	=
	\{x\in X\colon\bbE[\mathbbm{1}_{A_{X|\pi(\cdot)}}](x)\geq t\}
	\subset
	X,
	\]
	$t\in[0,1]$. For
	\[
	t^{\star}=\inf\Big\{t\in[0,1]\colon|\{\bbE[\mathbbm{1}_{A_{X|\pi(\cdot)}}]\geq t\}|\leq\bbE[|A_{X|\pi(\cdot)}|]\Big\}
	\]
	the \emph{Vorob'ev expectation} $\bbE_{\text{V}}[A_{X|\pi(\cdot)}]$ of $A_{X|\pi(\cdot)}$ is defined as a set which satisfies the volume condition $|\bbE_{\text{V}}[A_{X|\pi(\cdot)}]|=\bbE[|A_{X|\pi(\cdot)}|]$ and
	\begin{equation}\label{eq:defvorobev}
		\{\bbE[\mathbbm{1}_{A_{X|\pi(\cdot)}}]>t^\star\}
		\subset
		\bbE_{\text{V}}[A_{X|\pi(\cdot)}]
		\subset
		\{\bbE[\mathbbm{1}_{A_{X|\pi(\cdot)}}]\geq t^\star\}
	\end{equation}
	holds. The Vorob'ev expectation is the minimizer of the function
	\[
	X\to\bbR_{\geq 0},\qquad M\mapsto\bbE[|A_{X|\pi(\cdot)}\triangle M|].
	\]
	Thus, it is the set around which the expected misclustering rate of $A_{X|\pi(\cdot)}$ is has the lowest value. The Vorob'ev expectation can be seen as a special case of the more general distance-average expectations, see, e.g., \cite{Mol2017}. A special case thereof with simple interpretation is the following.
	
	\subsection{Cluster perspective: oriented distance function expectation}
	For $x\in X$ and $C\subset X$ we define the distance function from $x$ to $C$ as
	\[
	d_C(x)=
	\begin{cases}
		\inf_{y\in C}d(x,y),&C\neq\emptyset,\\
		\infty,&C=\emptyset,
	\end{cases}
	\]
	and the \emph{oriented distance function (ODF)} from $x$ to $C$ as
	\[
	b_C(x)=d_{\overline{C}}(x)-d_C(x).
	\]
	Then, if $\bbE[b_{A_{X|\pi(\cdot)}}](x)<\infty$ for all $x\in X$, the \emph{ODF-expectation} is defined as
	\begin{equation}\label{eq:ODFmean}
		\bbE_{\text{ODF}}[A_{X|\pi(\cdot)}]
		=
		\Big\{
		x\in X\colon\bbE[b_{A_{X|\pi(\cdot)}}](x)\geq 0
		\Big\},
	\end{equation}
	see \cite{JS2010}\footnote{We note the different sign of $b_A$ in the reference.}.
	
	\subsection{Cluster perspective: spectral expectation}
	Motivated by the simplicity of the definition \cref{eq:ODFmean} we may note that the signs of the ODF $b_{A_{X|\pi(\omega)}}(x)$, $x\in X$, and the (gauged) continuation $\rho_Xf_{\pi(\cdot)}$ from \cref{eq:efevalX1} coincide. It is thus tempting, to propose a new expectation 
	\begin{equation}\label{eq:spectralmean}
		\bbE_{\sigma}[A_{X|\pi(\cdot)}]
		=
		\Big\{
		x\in X\colon (\bbE[\rho_Xf_{\pi(\cdot)}])_x\geq 0
		\Big\}
	\end{equation}
	to which we will refer to as the \emph{spectral expectation} in the following.
	We note that the linearity of the mean implies that we can replace $\bbE[\rho_Xf_{\pi(\cdot)}]$ by $\rho_X\bbE[f_{\pi(\cdot)}]$.
	
	\section{Monte Carlo estimators for expectations of clusterings}\label{sec:mc}
	In most cases, the mean of the quantities of interest in  the previous section is not available in closed form but needs to be approximated. In the following, we rely on the Monte Carlo estimator for Banach space-valued quantities, i.e.,
	\[
	\bbE[g]
	\approx
	\frac{1}{M}
	\sum_{i=1}^Mg(\omega_i)
	\defis
	E_M[g]
	\]
	for any integrable mapping $g\colon\Omega\to\calY$, $\calY$ a Banach space, to approximate the coverage function \cref{eq:coveragefunction} by $E_M[\mathbbm{1}_{A_{X|\pi(\cdot)}}]\approx\bbE[\mathbbm{1}_{A_{X|\pi(\cdot)}}]$ and the expected misclustering rate \cref{eq:expmisclust} by $E_M\big[|X\triangle\pi(\cdot)|\big]\approx\bbE\big[|X\triangle\pi(\cdot)|\big]$. We recall that the Monte Carlo estimator is an unbiased estimator.
	
	The ODF-expectation \cref{eq:ODFmean} and the spectral expectation \cref{eq:spectralmean} can likewise be approximated using the Monte Carlo estimator by estimating $\bbE[b_{A_{X|\pi(\cdot)}}]$ and $\bbE[\rho_Xf_{\pi(\cdot)}]$, respectively. This yields the empirical estimators
	\[
	E_{M,\text{ODF}}[A_{X|\pi(\cdot)}]
	=
	\Big\{
	x\in X\colon E_M[b_{A_{X|\pi(\cdot)}}](x)\geq 0
	\Big\},
	\]
	and
	\[
	E_{M,\sigma}[A_{X|\pi(\cdot)}]
	=
	\Big\{
	x\in X\colon (E_M[\rho_Xf_{\pi(\cdot)}])_x\geq 0
	\Big\}.
	\]
	For the Vorob'ev expectation, we follow the the procedure by \cite{HST2012,Kov1986}. Introducing the empirical cardinalities
	\[
	\Gamma_M=\frac{1}{M}\sum_{i=1}^M|A_{X|\pi(\omega_i)}|
	\]
	and setting
	\[
	F_M(t)=|\{E_M[\mathbbm{1}_{A_{X|\pi(\cdot)}}]\geq t\}|
	\]
	and
	\[
	t^{\star}_n=\inf\Big\{t\in[0,1]\colon F_M(t)\leq\Gamma_M\Big\},
	\]
	the \emph{Kovyazin mean} $K_M[\mathbbm{1}_{A_{X|\pi(\cdot)}}]$, is a Borel set which is given with respect to an empirical version of \cref{eq:defvorobev}, i.e., a set such that $|K_M[\mathbbm{1}_{A_{X|\pi(\cdot)}}]|=\Lambda_M$ and
	\[
	\{E_M[\mathbbm{1}_{A_{X|\pi(\cdot)}}]>t^\star\}
	\subset
	K_M[A_{X|\pi(\cdot)}]
	\subset
	\{E_M[\mathbbm{1}_{A_{X|\pi(\cdot)}}]\geq t^\star\}.
	\]
	
	\section{Consistency analysis}\label{sec:consistency}
	
	\subsection{Consistency of spectral clustering}
	In the seminal paper \cite{VBB2008} it was shown that $\calU_X$ from \cref{eq:discreteIO} converges for $|X|\to\infty$ in a reasonable sense to a limit operator $\calU$ if some additional assumptions are made. To facilitate our understanding, we will write $X_n=X$ with $n=|X_n|=|X|$ in the following.
	\begin{assumption}[{\cite[General assumption]{VBB2008}}]\label{ass:VBB2008GA}
		Assume that $\calX$ is a compact metric space, $\calB$ the Borel $\sigma$-algebra on $\calX$, and $\bbP_\calX$ a probability measure on $(\calX,\calB)$. Assume without loss of generality that the support of $\bbP_\calX$ coincides with $\calX$ and that the data points $\{x_i\}_{i\in\bbN}$ are iid random variables, each with measure $\bbP_\calX$. Assume that the similarity function $k\colon\calX\times\calX\to\bbR$ is symmetric, continuous, and bounded away from zero, i.e., assume that there is a constant $\ell>0$ such that $k(x,y)>\ell$ for all $x,y\in\calX$.
	\end{assumption}
	\Cref{ass:VBB2008GA} allows us to introduce data-continuous versions of the operators from \cref{eq:continuouspartitioning}. To this end, we introduce the \emph{continuous degree function}
	\[
	d(x)=\int_\calX k(x,y)\dd\bbP_\calX(y),
	\]
	the \emph{continuous normalized similarity function}
	\[
	h(x,y)=\frac{k(x,y)}{\sqrt{d(x)d(y)}},
	\]
	and the integral operator
	\[
	\calT\colon \calC(\calX)\to \calC(\calX),\qquad \calT f(x)=\int_\calX h(x,y)f(y)\dd\bbP_\calX(y),
	\]
	to define
	\begin{equation}\label{eq:IO}
	\calU=I-\calT.
	\end{equation}
	The following relation holds between the spectrum of $\calU_X$ and the spectrum of $\calU$.
	
	\begin{theorem}[{\cite[Theorem 15]{VBB2008}}]\label{thm:VBB2008thm15}
		Assume that \cref{ass:VBB2008GA} holds and write $X_n=\{x_1,\ldots,x_n\}\subset\calX$. Let $\lambda\neq 1$ be an eigenvalue of $\calU$ and $M\subset\bbC$ an open neighbourhood of $\lambda$ such that $M\cap\sigma(\calU)$, with the spectrum $\sigma(\calU)$ of $\calU$, only contains $\lambda$. Then
		\begin{enumerate}
			\item Convergence of eigenvalues: The eigenvalues in $\sigma(\calU_{X_n})\cap M$ converge to $\lambda$ in the sense that every sequence $\{\lambda_n\}_{n\in\bbN}$ with $\lambda_n\in\sigma(\calU_{X_n})\cap M$ satisfies $\lambda_n\to\lambda$ almost surely.
			\item Convergence of eigenvectors: If $\lambda$ is a simple eigenvalue, then the eigenvectors of $\calU_{X_n}$ converge almost surely up to a change of sign: if $f_n$ is the eigenfunction of $\calU_{X_n}$ with eigenvalue $\lambda_n$, and $f$ the eigenfunction to $\lambda$, then there exists a sequence $\{a_n\}_{n\in\bbN}\in\{\pm1\}^\bbN$ (the signs of the eigenvectors) such that
			\[
			\|a_nf_n-f\|_{C(\calX)}\to 0\qquad\text{as}~n\to\infty
			\]
			almost surely. Moreover, the sets $\{a_nf_{X_n}>0\}$ and $\{f>0\}$ converge, that is, their symmetric difference satisfies
			\[
			\bbP_\calX\big(\{f\geq 0\}\triangle\{a_nf_{X_n}\geq 0\}\big)\to 0,\qquad\text{as}~n\to\infty.
			\]
		\end{enumerate}
	\end{theorem}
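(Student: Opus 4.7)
The plan is to realize both conclusions as consequences of operator convergence $\calT_{X_n}\to\calT$ on $\calC(\calX)$ together with spectral perturbation theory for compact operators. Since $\calU_{X_n}=I-\calT_{X_n}$ and $\calU=I-\calT$ differ only by compact operators, the spectral statements for $\calU$ will follow from the analogous statements for $\calT$ under the correspondence $\lambda\leftrightarrow 1-\lambda$; note that $\lambda=1$ is excluded precisely to avoid the kernel of $\calT$.

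First I would establish uniform convergence of the integral kernels. Under \cref{ass:VBB2008GA}, the set $\{k(x,\cdot)\colon x\in\calX\}$ is a uniformly equicontinuous and uniformly bounded family on the compact space $\calX$, so by a uniform law of large numbers (Glivenko--Cantelli for a universal Donsker-type class, or an $\varepsilon$-net argument using continuity) we get
\[
\sup_{x\in\calX}\bigl|d_{X_n}(x)-d(x)\bigr|\to 0\qquad\text{almost surely,}
\]
and similarly for $h_{X_n}-h$, using that $k\geq\ell>0$ keeps the denominators bounded away from zero. Applying the same argument to the family $\{h(x,\cdot)f(\cdot)\colon x\in\calX\}$ for each $f\in\calC(\calX)$ yields $\|\calT_{X_n}f-\calT f\|_{\calC(\calX)}\to 0$ a.s. Strong convergence is not enough for spectral perturbation, but the uniform equicontinuity of the kernels $h_{X_n}$ implies that $\{\calT_{X_n}\}_{n}$ maps bounded sets of $\calC(\calX)$ into a uniformly equicontinuous family; by Arzel\`a--Ascoli, $\{\calT_{X_n}\}$ is collectively compact, and collectively compact convergence together with strong convergence implies norm convergence of resolvents on any compact set avoiding $\sigma(\calT)$ (Anselone--Chatelin theory).

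From here the first two claims follow by a standard contour argument. Fix a small positively oriented contour $\gamma\subset M$ enclosing only the eigenvalue $1-\lambda\neq 0$ of $\calT$. For $n$ large enough, $\gamma\subset\rho(\calT_{X_n})$ and the spectral projections
\[
P_n=\frac{1}{2\pi i}\oint_\gamma(z-\calT_{X_n})^{-1}\dd z,
\qquad
P=\frac{1}{2\pi i}\oint_\gamma(z-\calT)^{-1}\dd z
\]
satisfy $\|P_n-P\|\to 0$ almost surely. Simplicity of $\lambda$ gives $\dim\operatorname{range}P=1$, hence $\dim\operatorname{range}P_n=1$ eventually, which yields a unique eigenvalue $\lambda_n\in\sigma(\calU_{X_n})\cap M$ with $\lambda_n\to\lambda$. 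Picking any eigenfunction $\tilde f_n$ spanning $\operatorname{range}P_n$ and normalising, $\tilde f_n\to\pm f$ in $\calC(\calX)$; choosing $a_n\in\{\pm1\}$ so that $\langle a_n\tilde f_n,f\rangle\geq 0$ gives the required sign alignment and $\|a_n\tilde f_n-f\|_{\calC(\calX)}\to 0$ a.s.

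For the final level-set statement I would use the pointwise bound
\[
\{f\geq 0\}\triangle\{a_nf_{X_n}\geq 0\}
\subset
\bigl\{|f|\leq\|a_nf_{X_n}-f\|_{\calC(\calX)}\bigr\},
\]
so the $\bbP_\calX$-measure of the symmetric difference is dominated by $\bbP_\calX(\{|f|\leq\varepsilon_n\})$ with $\varepsilon_n\to 0$. By dominated convergence this tends to $\bbP_\calX(\{f=0\})$, so the conclusion reduces to showing $\bbP_\calX(\{f=0\})=0$. This is the step I expect to be the main obstacle: it cannot follow from continuity alone but has to be extracted from the eigenvalue equation $(I-\calT)f=\lambda f$, writing $f=(1-\lambda)^{-1}\calT f$ and exploiting that $k$ (hence $h$) is strictly positive and continuous, so $f$ is a nontrivial continuous function whose zero set cannot carry positive $\bbP_\calX$-mass without forcing $f\equiv 0$ on $\operatorname{supp}\bbP_\calX=\calX$. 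The other delicate point is rigorously invoking collectively compact convergence to upgrade strong convergence to convergence of resolvents on contours; this is the technical backbone of the argument and is where the general assumptions (compactness of $\calX$, continuity and positivity of $k$) are really used.
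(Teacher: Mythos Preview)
The paper does not actually prove this theorem; it is quoted without proof from \cite[Theorem~15]{VBB2008}. Your outline is essentially the strategy of that reference: von~Luxburg, Belkin and Bousquet establish uniform convergence $d_{X_n}\to d$ via a Glivenko--Cantelli argument, obtain collectively compact convergence of $\calT_{X_n}$ to $\calT$ on $\calC(\calX)$, and then invoke Anselone--Atkinson perturbation theory (the present paper's \cref{cor:efuniformconv} points explicitly to \cite[Theorem~3]{Atk1967} for this step) to transfer spectral information. Your contour/spectral-projection argument is the right mechanism, so the eigenvalue and eigenfunction convergence parts are sound.

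The one place your sketch is genuinely incomplete is the last step, and you flag it yourself. The inclusion
\[
\{f\geq 0\}\triangle\{a_nf_n\geq 0\}\subset\{|f|\leq\varepsilon_n\}
\]
is correct (and is reused verbatim in the paper's proof of \cref{thm:consistency}), but your proposed justification of $\bbP_\calX(\{f=0\})=0$ does not work as written: strict positivity of $h$ does not by itself prevent a nontrivial eigenfunction from vanishing on a set of positive $\bbP_\calX$-measure. Any non-principal eigenfunction must change sign, and the integral representation $f=(1-\lambda)^{-1}\int_\calX h(\cdot,y)f(y)\dd\bbP_\calX(y)$ alone does not rule out a fat zero set. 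A self-contained route is via real-analyticity: for analytic kernels such as the Gaussian, $f$ extends to a real-analytic function on $\bbR^d$, so its zero set has Lebesgue measure zero, and the conclusion follows whenever $\bbP_\calX$ is absolutely continuous. Without such an argument (or an explicit hypothesis $\bbP_\calX(\{f=0\})=0$), the level-set convergence remains unproven.
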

	While the above behaviour is a quantitative convergence result, we will require in the following a qualitative convergence result. In the following we recall such a quantitative result for the case when $k$ is the Gaussian kernel.
	\begin{theorem}[{\cite[Example 1]{VBB2008}}]\label{thm:uniformconv}
		Let $\calX$ be a compact subset of $\bbR^d$ and $k(x,y)=\exp(-\|x-y\|_{\bbR^d}^2/(2\sigma^2)))$, $\sigma>0$.
		Under the assumptions of \cref{thm:VBB2008thm15} there exists a constant $C>0$ which is independent of $X_n$ such that
		\[
		\|a_nf_n-f\|_{C(\calX)}\leq Cn^{-1/2}.
		\]
	\end{theorem}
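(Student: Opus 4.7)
The plan is to propagate an $n^{-1/2}$ rate through two stages: a uniform empirical-process bound for the operator $\calT_{X_n}-\calT$, followed by holomorphic functional calculus to transfer the rate to the (simple) eigenpair.

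First I would establish that, for the Gaussian kernel on the compact set $\calX\subset\bbR^d$, the function family $\{k(x,\cdot)\colon x\in\calX\}$ is uniformly bounded and uniformly Lipschitz, hence has polynomially growing covering numbers in $\calC(\calX)$. A Hoeffding bound applied at each point of an $\varepsilon$-net, combined with a union bound and the Lipschitz-in-$x$ structure of $k$, yields
\[
\sup_{x\in\calX}\bigl|d_{X_n}(x)-d(x)\bigr|\leq Cn^{-1/2}
\]
almost surely (via Borel--Cantelli), with a constant depending only on $\calX$, $\sigma$, and $d$. Because $d\geq\ell>0$ by \cref{ass:VBB2008GA}, this transfers to $\sup_{x,y}|h_{X_n}(x,y)-h(x,y)|\leq Cn^{-1/2}$.

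Next I would split
\[
(\calT_{X_n}f-\calT f)(x)=\int_\calX\bigl(h_{X_n}(x,y)-h(x,y)\bigr)f(y)\,\dd\bbP_{X_n}(y)+\int_\calX h(x,y)f(y)\,\dd(\bbP_{X_n}-\bbP_\calX)(y).
\]
The first term is bounded by the previous step times $\|f\|_{\calC(\calX)}$, while the second is a genuine empirical-process increment over the uniformly Lipschitz family $\{h(x,\cdot)f\colon x\in\calX\}$, which inherits the same entropy bound and hence the same $n^{-1/2}$ rate. Together this gives
\[
\|\calT_{X_n}-\calT\|_{\calC(\calX)\to\calC(\calX)}\leq Cn^{-1/2}\qquad\text{a.s.}
\]

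Finally I would invoke analytic perturbation theory for the compact operators $\calU_{X_n}=I-\calT_{X_n}$ and $\calU=I-\calT$. Because $\lambda\neq1$ is a simple, isolated eigenvalue of $\calU$, there is a fixed circle $\gamma\subset\bbC$ in the resolvent set of $\calU$ around $\lambda$ on which $\|(\calU-z)^{-1}\|$ is uniformly bounded. The second Neumann identity together with the operator-norm bound above implies that for $n$ sufficiently large $\gamma$ also lies in the resolvent set of $\calU_{X_n}$ with $\|(\calU_{X_n}-z)^{-1}\|$ uniformly bounded along $\gamma$, and the Riesz projectors $P=-\frac{1}{2\pi i}\oint_\gamma(\calU-z)^{-1}\dd z$ and $P_n$ satisfy $\|P_n-P\|\leq C'n^{-1/2}$. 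Both projectors are rank one, so choosing $a_n\in\{\pm1\}$ to align the signs and using $f_n=P_nf_n$, $f=Pf$ with unit normalization gives the claimed bound $\|a_nf_n-f\|_{\calC(\calX)}\leq Cn^{-1/2}$.

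The main obstacle is the upgrade from a concentration-in-probability bound to an almost-sure uniform rate in the first step: pointwise Hoeffding provides exponential tails, but the union bound over an $\varepsilon$-net has to be combined with the Lipschitz modulus of $k$ and summed geometrically via Borel--Cantelli without losing the $n^{-1/2}$ order. The Gaussian (or in fact, any Lipschitz and bounded) structure of $k$ is exactly what makes this entropy-plus-net argument go through; without it, one would only obtain the qualitative convergence of \cref{thm:VBB2008thm15}.
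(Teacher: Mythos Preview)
The paper does not supply its own proof of this theorem; it is quoted verbatim from \cite[Example~1]{VBB2008}. The only hint the paper gives about the underlying argument is in the proof of \cref{cor:efuniformconv}, which points to \cite[Chapter~6]{VBB2008}, \cite[Theorem~3]{Atk1967}, and Kato's perturbation theory \cite{Kat1995}. Your reconstruction matches this template exactly: an empirical-process bound on $\|\calT_{X_n}-\calT\|_{\calC(\calX)\to\calC(\calX)}$ via covering numbers of the Lipschitz family induced by the Gaussian kernel (this is the content of \cite[Chapter~6]{VBB2008}), followed by a resolvent/Riesz-projector argument to transfer the operator-norm rate to the simple eigenpair (this is the Atkinson--Kato step). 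So there is no meaningful difference to report---your approach \emph{is} the one the paper defers to.

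One small caveat worth flagging: you claim an almost-sure $n^{-1/2}$ rate from Hoeffding plus a union bound over an $\varepsilon$-net plus Borel--Cantelli. The standard execution of that argument yields an extra logarithmic factor, since the deviation level must be inflated to $\sqrt{(\log n)/n}$ to make the tail probabilities summable. The theorem as stated in the paper is silent on whether the bound is meant almost surely or with high probability, and the downstream use in \cref{thm:consistency} is insensitive to a $\sqrt{\log n}$ loss; but if you want the clean $n^{-1/2}$ almost surely, you would need a sharper chaining or Talagrand-type inequality rather than the naive net-plus-union-bound you sketch.
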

	
	\subsection{Consistency of Monte Carlo estimators}
	In the following we summarize known consistency estimators for the quantities from \cref{sec:expectations} in a common setting. To this end, we make the following assumption.
	\begin{assumption}\label{ass:mcassum}
		Let $(\calX,\calB,\mu_\calX)$ be a measure space and $g\colon \calX\to\bbR$ a measurable function. Let $(\Omega,\Sigma,\bbP_\Omega)$ be a probability space and $g_m=g_m(\omega,\cdot)$, $\omega\in\Omega$, a sequence of random functions $g_m\colon M\to\bbR$, $m\in\bbN$. Assume that for each $m$, with probability one, $g_m=g_m(\omega,\cdot)$ is measurable.
	\end{assumption}
	
	From the strong law of large numbers we know that the Monte Carlo estimator applied to random variables taking values Banach spaces is a consistent estimator. More precisely, we have the following result.
	\begin{theorem}[Strong law of large numbers]\label{thm:LLN}
		Let \cref{ass:mcassum} hold and let $g_m,g\in L_{\mu_\calX}^1(\Omega;F)$, $m\in\bbN$, $F$ a Banach space, be independent, identically distributed random variables with $\bbE[g_m]=g$, $m\in\bbN$. Write
		\[
		E_M[g]=\frac{1}{M}\sum_{m=1}^Mg_m.
		\]
		Then it holds
		\[
		\|E_M[g]-g\|_{F}\to 0
		\qquad
		\text{as}~M\to\infty
		\]
		$\bbP_\Omega$-almost surely.
	\end{theorem}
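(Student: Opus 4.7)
The plan is to reduce the Banach-valued SLLN to the classical scalar SLLN via Bochner's approximation of integrable functions by simple functions. This is essentially Mourier's theorem (1953), but since the statement is self-contained, I would give a direct proof along the following lines.

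First, I would fix $\varepsilon>0$ and use Bochner integrability of $g_1$ to choose a simple function $s=\sum_{i=1}^N c_i\mathbbm{1}_{A_i}$ with $c_i\in F$ and $A_i\in\Sigma$ pairwise disjoint such that $\mathbb{E}[\|g_1-s\|_F]<\varepsilon$. Denote by $s_m(\omega)$ the copy of $s$ built from $g_m$, i.e.\ the simple-function approximation associated with the same partition evaluated at the $m$-th draw (so the $s_m$ are iid copies of $s_1=s$, and the pairs $(g_m,s_m)$ are iid in $F\times F$).

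The second step is to insert $E_M[s]$ and $\mathbb{E}[s]$ via the triangle inequality,
\begin{equation*}
\|E_M[g]-g\|_F
\;\leq\;
\tfrac{1}{M}\sum_{m=1}^M\|g_m-s_m\|_F
\;+\;
\|E_M[s]-\mathbb{E}[s]\|_F
\;+\;
\|\mathbb{E}[s]-g\|_F.
\end{equation*}
The third term is at most $\varepsilon$ by choice of $s$. For the first term, $\|g_m-s_m\|_F$ is a scalar iid sequence in $L^1(\Omega)$, so the classical Kolmogorov SLLN gives that $\mathbb{P}_\Omega$-almost surely the average converges to $\mathbb{E}[\|g_1-s\|_F]<\varepsilon$. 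For the second term, $E_M[s]-\mathbb{E}[s]=\sum_{i=1}^N c_i\bigl(\tfrac{1}{M}\sum_{m=1}^M\mathbbm{1}_{A_i}(\omega_m)-\mathbb{P}_\Omega(A_i)\bigr)$, which converges to $0$ almost surely by $N$ applications of the scalar SLLN to the bounded Bernoulli variables $\mathbbm{1}_{A_i}(\omega_m)$.

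Combining these three observations on a common event of probability one yields $\limsup_{M\to\infty}\|E_M[g]-g\|_F\leq 2\varepsilon$ almost surely, and since $\varepsilon>0$ was arbitrary (take a countable sequence $\varepsilon_k\to 0$ and intersect the corresponding probability-one events), the claim follows. The only real obstacle is technical rather than conceptual: one must ensure that the Bochner-style approximation of $g_1$ lifts to iid copies of the pair $(g_m,s_m)$ so that the scalar SLLN can be applied to $\|g_m-s_m\|_F$ and to $\mathbbm{1}_{A_i}(\omega_m)$ simultaneously. This is handled by building $s_m$ as a deterministic Borel function of $g_m$, which preserves the iid structure and the measurability required in \cref{ass:mcassum}.
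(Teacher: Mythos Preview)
Your proof is correct and follows the classical Mourier argument for the Banach-valued strong law of large numbers. However, the paper does not actually prove this theorem: it is stated as a known background result (introduced with ``From the strong law of large numbers we know\ldots'') and used as a black box in the proof of \cref{thm:consistency}. So there is no proof in the paper to compare against; you have supplied a self-contained argument where the paper simply cites the result. Your handling of the one genuine technicality---building $s_m$ as a Borel function of $g_m$ so that the pairs $(g_m,s_m)$ inherit the iid structure---is the right observation, and it goes through once one notes that the essential range of a Bochner-integrable $g_1$ is separable, allowing a Borel nearest-point projection onto a finite set.
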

  	In our setting, the law of large numbers immediately implies that
	\[
  		E_M[\mathbbm{1}_{\{g\geq0\}}]\to\bbE[\mathbbm{1}_{\{g\geq0\}}]
  	\]
	uniformly for $\bbP_\calX$-almost every $x\in\calX$ as $M\to\infty$ and $\bbP_\Omega$-almost surely. We will use this result below to show consistency of the approximations for the coverage function.
	
	Despite the lack of suitable linear relations, it is still possible to derive laws of large numbers for set-valued random variables. For the ODF expectation and the spectral expectation, the following result is helpful.
	\begin{theorem}[{\cite[Theorem 3]{CGR2006}}]\label{thm:cuevathm}
		Let the assumptions of \cref{ass:mcassum} hold with $g,g_m\in L^p_{\mu_\calX}(\calX)$ for some $1\leq p\leq\infty$. Assume that
		\[
		\mu_\calX(\{g=0\})=0
		\]
		and
		\[
		\mu_\calX(\{g\in[-\varepsilon_0,\varepsilon_0)\})<\infty
		\]
		for some $\varepsilon_0$. Then $\|g-g_m\|_{L^p_{\mu_\calX}(\calX)}\to 0$ $\bbP_\Omega$-almost surely implies
		\[
		\mu_\calX(\{g\geq 0\}\triangle \{g_m\geq 0\})\to 0\qquad\text{as}~m\to \infty
		\]
		$\bbP_\Omega$-almost surely.
	\end{theorem}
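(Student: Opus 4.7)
The core observation is that on the symmetric difference $\{g\geq 0\}\triangle\{g_m\geq 0\}$ the values $g(x)$ and $g_m(x)$ strictly disagree in sign (one is $\geq 0$, the other $<0$), and a short case check in each of the two situations yields the pointwise inequality $|g(x)|\leq|g(x)-g_m(x)|$ on this set. I would exploit this by fixing an arbitrary $\varepsilon\in(0,\varepsilon_0)$ and writing
\[
\{g\geq 0\}\triangle\{g_m\geq 0\}\subset A_m(\varepsilon)\cup B(\varepsilon),\qquad A_m(\varepsilon)=\{|g-g_m|>\varepsilon\},\ B(\varepsilon)=\{|g|\leq\varepsilon\},
\]
since outside $B(\varepsilon)$ one has $|g|>\varepsilon$, hence $|g-g_m|\geq|g|>\varepsilon$ on the symmetric difference.

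The two pieces are then controlled by independent mechanisms. For $A_m(\varepsilon)$, Markov's inequality gives $\mu_\calX(A_m(\varepsilon))\leq\varepsilon^{-p}\|g-g_m\|_{L^p_{\mu_\calX}(\calX)}^p$ when $1\leq p<\infty$, while in the $p=\infty$ case $A_m(\varepsilon)$ becomes $\mu_\calX$-null as soon as $\|g-g_m\|_{L^\infty}<\varepsilon$; in either case, outside a $\bbP_\Omega$-null set on which the hypothesis $\|g-g_m\|_{L^p}\to 0$ fails, $\mu_\calX(A_m(\varepsilon))\to 0$ as $m\to\infty$ for each fixed $\varepsilon$. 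For $B(\varepsilon)$, observe that $B(\varepsilon)\subset\{g\in[-\varepsilon_0,\varepsilon_0)\}$ whenever $\varepsilon<\varepsilon_0$, which has finite measure by assumption, and that the family $\{B(\varepsilon)\}_{\varepsilon>0}$ decreases to $\{g=0\}$ as $\varepsilon\downarrow 0$; continuity from above of $\mu_\calX$ combined with $\mu_\calX(\{g=0\})=0$ yields $\mu_\calX(B(\varepsilon))\to 0$. Combining, for every $\varepsilon\in(0,\varepsilon_0)$,
\[
\limsup_{m\to\infty}\mu_\calX\bigl(\{g\geq 0\}\triangle\{g_m\geq 0\}\bigr)\leq\mu_\calX(B(\varepsilon))\quad\bbP_\Omega\text{-a.s.},
\]
and sending $\varepsilon\to 0$ concludes.

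The argument is not deep; the main care needed is the bookkeeping around the $\bbP_\Omega$-a.s.\ quantifier. I would first fix the full-measure event on which $\|g-g_m\|_{L^p}\to 0$ and run the above estimate pathwise, so that no swap between ``almost surely'' and ``$\varepsilon\to 0$'' is required. The only other subtlety worth flagging is the $p=\infty$ case, where Markov's inequality must be replaced by the direct essential-sup bound, and the half-open interval in the finiteness assumption, which is harmless since $\{|g|\leq\varepsilon\}\subset\{g\in[-\varepsilon_0,\varepsilon_0)\}$ for $\varepsilon<\varepsilon_0$.
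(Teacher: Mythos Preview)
The paper does not supply its own proof of this theorem; it is quoted as \cite[Theorem~3]{CGR2006} and invoked as a black box in the proof of \cref{thm:consistency}. Your argument is correct and is the standard one: the pointwise bound $|g|\leq|g-g_m|$ on the symmetric difference, the decomposition into $\{|g-g_m|>\varepsilon\}\cup\{|g|\leq\varepsilon\}$, Markov's inequality (respectively the essential-supremum bound when $p=\infty$) for the first piece, and continuity of $\mu_\calX$ from above for the second piece using the finite-measure hypothesis $\mu_\calX(\{g\in[-\varepsilon_0,\varepsilon_0)\})<\infty$ together with $\mu_\calX(\{g=0\})=0$. Your handling of the $\bbP_\Omega$-almost-sure quantifier by first fixing the full-measure event on which $\|g-g_m\|_{L^p_{\mu_\calX}(\calX)}\to 0$ and then arguing pathwise is also the correct way to order the quantifiers.
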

	Based on this theorem, a consistency result for the Monte Carlo method without additional discretization error was proven in \cite{JS2012}.
	
	A consistency result for the Kovyazin mean was derived in \cite{Kov1986} for  Monte Carlo sampling and in \cite{HST2012} for Monte Carlo sampling with additional grid discretization. However, it is not immediate how these techniques can be applied in the present setting.
	
	\subsection{Consistency of Monte Carlo estimators for spectral clustering}
	With the preliminaries in place, we will in the following provide a theorem on the consistency of the Monte Carlo estimators in the large data limit, assuming that we have no missing or additional data.
	\begin{assumption}\label{ass:randomdata}
		Assume that \cref{ass:VBB2008GA} holds and let $\tilde{X}_n=\{x_1,\ldots,x_n\}\subset\calX$, $n\in\bbN$. Let $(\Omega,\Sigma,\bbP_\Omega)$ be a probability space and $\omega\in\Omega$ independent of $x_i\in\calX$, $i\in\bbN$. Let $\varepsilon\colon\calX\times\Omega\to\calX$ such that for $\bbP_\Omega$-almost every $\omega\in\Omega$ the support of its pushforward measure $\bbP_{\calX,\omega}=\bbP_\calX\circ\varepsilon(\cdot,\omega)^{-1}$ coincides with $\calX$, and
		\[
		X_n(\omega)
		=
		\{\varepsilon(x,\omega)\colon x\in\widetilde{X}_n\}
		\]
		be the data sets subject to measurement noise consisting of the iid random variables $\varepsilon(x_i,\omega)$, $i=1,\ldots,n$, distributed according to $\bbP_{\calX,\omega}$.
	\end{assumption}
	In the following, we will also require the analogs of \cref{eq:IO} subject to measurement noise. To this end, we introduce
	\begin{align*}
	d(x,\omega)&=\int_\calX k(x,y)\dd\bbP_{\calX,\omega}(y),\\
	h(x,y,\omega)&=\frac{k(x,y)}{\sqrt{d(x,\omega)d(y,\omega)}},
	\end{align*}
	the integral operator
	\[
	\calT(\omega)\colon \calC(\calX)\to \calC(\calX),\qquad \calT(\omega) f(x)=\int_\calX h(x,y,\omega)f(y)\dd\bbP_{\calX,\omega}(y),
	\]
	and
	\[
	\calU(\omega)=I-\calT(\omega).
	\]
	The following assumption is to ensure that the considered eigenpairs $(\lambda(\omega),f(\omega))$ of the randomly perturbed operators $\calU(\omega)$ can be put in a meaningful relation to the ones of $\calU$.
	\begin{assumption}\label{ass:random}
		Assume that \cref{ass:VBB2008GA} and \cref{ass:randomdata} hold. Assume further that
		\begin{enumerate}
			\item $\lambda\neq 1$ is an eigenvalue of multiplicity one of $\calU$ and $f$ its associated eigenfunction. Assume that $M_1,M_2\subset\bbC$ are open subset with positive distance such that $M_1$ is an open neighbourhood of $\lambda$ and that $M_2\subset\bbC$ is an open neighbourhood of $\sigma(\calU)\setminus\{\lambda\}$. 
			\item $\bbP_\Omega$-almost all randomly perturbed operators $\calU(\omega)$ have an eigenvalue $\lambda(\omega)$ of multiplicity one contained in $M_1$ with corresponding eigenfunction $f(\omega)$.
			Assume that $\sigma(\calU(\omega))\setminus\{\lambda(\omega)\}\subset M_2$.
		\end{enumerate}
	\end{assumption}
	An immediate consequence is the following.
	\begin{corollary}\label{cor:efuniformconv}
		Let $\calX$ be a compact subset of $\bbR^d$ and consider the similarity function $k(x,y)=\exp(-\|x-y\|_{\bbR^d}^2/(2\sigma^2)))$ with $\sigma>0$. Let \cref{ass:VBB2008GA}, \cref{ass:randomdata}, and \cref{ass:random} hold. Then the eigenvectors of $\calU_{X_n(\omega)}$ converge almost surely up to a change of sign: if $f_n(\omega)$ is the eigenfunction of $\calU_{X_n(\omega)}$ with eigenvalue $\lambda_n(\omega)$, and $f(\omega)$ the eigenfunction to $\lambda(\omega)$, then there exists a sequence $\{a_n(\omega)\}_{n\in\bbN}\in\{\pm1\}^\bbN$ (the signs of the eigenvectors) and a constant $C>0$ which is independent of $X_n(\omega)$ and $\omega$ such that
		\[
		\|a_n(\omega)f_n(\omega)-f(\omega)\|_{C(\calX)}\leq Cn^{-1/2},
		\]
		$\bbP_{\calX,\omega}$-almost surely and for $\bbP_\Omega$-almost all $\omega\in\Omega$.
	\end{corollary}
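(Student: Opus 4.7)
The plan is to apply \cref{thm:uniformconv} conditionally on $\omega$ and then carefully track the ingredients entering the constant $C$ to show that a uniform choice in $\omega$ is possible. First, I would fix $\omega$ in a set $\Omega' \subset \Omega$ of $\bbP_\Omega$-full measure on which both parts of \cref{ass:random} hold and the support of $\bbP_{\calX,\omega}$ is $\calX$. For any such $\omega$, the data set $X_n(\omega) = \{\varepsilon(x_i,\omega)\colon i=1,\dots,n\}$ consists of iid samples drawn from $\bbP_{\calX,\omega}$, the pair $(\calX,\bbP_{\calX,\omega})$ satisfies \cref{ass:VBB2008GA} with the same fixed Gaussian kernel $k$, and $\lambda(\omega)\in M_1$ is a simple eigenvalue of $\calU(\omega)$ separated from the remainder of $\sigma(\calU(\omega))\subset M_2$. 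Hence \cref{thm:uniformconv}, applied to this conditional problem, yields signs $a_n(\omega)\in\{\pm 1\}$ and a constant $C(\omega)>0$ with $\|a_n(\omega)f_n(\omega)-f(\omega)\|_{C(\calX)}\le C(\omega)n^{-1/2}$, $\bbP_{\calX,\omega}$-almost surely.

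The main obstacle, and really the only content beyond applying \cref{thm:uniformconv}, is to argue that $C(\omega)$ may be chosen independently of $\omega$. I would inspect the proof of \cite[Example 1]{VBB2008}, where the constant is built from three ingredients: an upper bound on $k$ on $\calX\times\calX$, a positive lower bound on the degree function $d(\cdot,\omega)$, and the spectral gap separating $\lambda(\omega)$ from the rest of $\sigma(\calU(\omega))$. The first ingredient is trivial since $k(x,y)\le 1$ for the Gaussian kernel, independent of $\omega$. For the second, \cref{ass:VBB2008GA} provides a uniform lower bound $k(x,y)\ge \ell>0$, hence
\[
d(x,\omega)=\int_\calX k(x,y)\dd\bbP_{\calX,\omega}(y)\ge \ell\quad\text{for all $x\in\calX$ and all $\omega\in\Omega'$},
\]
so $h(x,y,\omega)$ is bounded above uniformly in $\omega$. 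For the third, \cref{ass:random} fixes the sets $M_1,M_2\subset\bbC$ once and for all, with positive distance $\delta=\operatorname{dist}(M_1,M_2)>0$; consequently the spectral gap around $\lambda(\omega)$ is at least $\delta$ for every admissible $\omega$. These three uniform bounds feed into the resolvent/perturbation estimates in the proof of \cref{thm:uniformconv} to produce a constant $C$ that depends only on $\ell$, $\delta$, and $\|k\|_\infty$, but not on $\omega$.

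Finally, I would combine the conditional almost sure statement with the uniform constant. For each $\omega\in\Omega'$ the set $N(\omega)\subset\calX^\bbN$ on which the rate fails has $\bbP_{\calX,\omega}$-measure zero; integrating over $\omega\in\Omega$ via Fubini on the product space $\Omega\times\calX^\bbN$, on which $\omega$ and the sampling are independent by \cref{ass:randomdata}, shows that the exceptional set is a null set in the product measure. This yields the uniform bound $\|a_n(\omega)f_n(\omega)-f(\omega)\|_{C(\calX)}\le Cn^{-1/2}$, $\bbP_{\calX,\omega}$-almost surely and for $\bbP_\Omega$-almost every $\omega$, as claimed. The main difficulty is really only bookkeeping: confirming, by reading the proof of \cref{thm:uniformconv} carefully, that no hidden $\omega$-dependent quantity sneaks into $C$.
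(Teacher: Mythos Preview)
Your proposal is correct and follows the same approach as the paper: apply \cref{thm:uniformconv} conditionally on $\omega$ and then verify, by inspecting the ingredients of its proof, that the resulting constant can be taken uniform in $\omega$ thanks to the $\omega$-independent kernel bounds and the uniform spectral gap furnished by \cref{ass:random}. The paper's proof is a terse pointer to the same mechanism, citing \cite[Chapter~6]{VBB2008} and \cite[Theorem~3]{Atk1967} for the quantitative operator approximation and \cite{Kat1995} for the eigenvalue perturbation argument that turns the uniform gap $\operatorname{dist}(M_1,M_2)>0$ into uniform control of the spectral projections; your write-up is essentially a fleshed-out version of that sketch.
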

	\begin{proof}
		Follows from a close investigation of the proof of \cref{thm:uniformconv}, the details of which are given in \cite[Chapter 6]{VBB2008} and \cite[Theorem 3]{Atk1967}, and an eigenvalue perturbation argument \cite{Kat1995}.
	\end{proof}
	With all the ingredients available, we prove in the following that continuous analogs of the finite data empirical mean approximations, i.e., Monte Carlo estimators, of the eigenfunctions, the coverage function, the ODF-expectation, and the spectral expectation are consistent estimators towards the infinite data point and Monte Carlo sample limits. Moreover, the expected misclustering rate is a consistent estimator towards the infinite Monte Carlo sample limit. Due to its technical complexity a result for the Vorob'ev expectation is beyond the scope of this paper and left as future research.
	\begin{theorem}\label{thm:consistency}
		Let $\calX$ be a compact subset of $\bbR^d$ and consider the similarity function $k(x,y)=\exp(-\|x-y\|_{\bbR^d}^2/(2\sigma^2)))$ with $\sigma>0$. Let \cref{ass:VBB2008GA}, \cref{ass:randomdata}, and \cref{ass:random} hold. Write
		\[
		A_{\calX|\pi_n(\omega)}=\{a_n(\omega)f_n(\omega)\geq 0\},
		\quad
		A_{\calX(\omega)}=\{f(\omega)\geq 0\}.
		\]
		Then it holds
		\begin{align}
			\|E_M[a_nf_n]-\bbE[f]\|_{C(\calX)}&\overset{n,M\to\infty}{\longrightarrow} 0,&&\bbP_\calX\otimes\bbP_\Omega-\text{a.s.},\label{eq:efconv}\\
			\big\|E_M[\mathbbm{1}_{A_{\calX|\pi_n(\cdot)}}]-\bbE[\mathbbm{1}_{A_{\calX(\cdot)}}]\big\|_{L_{\bbP_\Omega\otimes\bbP_\calX}^1(\Omega\times\calX)}&\overset{n,M\to\infty}{\longrightarrow} 0,&&\bbP_\calX-\text{a.s.},\label{eq:covconf}
			\intertext{and, if $\bbP_{\Omega}(\{\bbE[b_{\{f\geq 0\}}]=0\})=0$,}
			\bbP_\calX\Big(E_{M,\text{ODF}}\big[A_{\calX|\pi_n(\cdot)}\big]\triangle \bbE_{\text{ODF}}\big[A_{\calX(\cdot)}\big]\Big)&\overset{n,M\to\infty}{\longrightarrow} 0,&&\bbP_\calX\otimes\bbP_\Omega-\text{a.s.},\label{eq:ODFconv}
			\intertext{and, if $\bbP_{\Omega}(\{\bbE[f]=0\})=0$,}
			\bbP_\calX\Big(E_{M,\sigma}\big[A_{\calX|\pi_n(\cdot)}\big]\triangle \bbE_{\sigma}\big[A_{\calX(\cdot)}\big]\Big)&\overset{n,M\to\infty}{\longrightarrow} 0,&&\bbP_\calX\otimes\bbP_\Omega-\text{a.s.}\label{eq:specconv}
		\end{align}
		The limits are independent of the order in which they are taken. Moreover, it holds
		\begin{align}\label{eq:misclrate}
			E_M\big[|A_{X_n}\triangle A_{X_n|\pi_n(\cdot)}|\big]
			\overset{M\to\infty}{\longrightarrow}
			\bbE\big[|A_{X_n}\triangle A_{X_n|\pi_n(\cdot)}|\big],&&\bbP_\calX\otimes\bbP_\Omega-\text{a.s.}
		\end{align}
	\end{theorem}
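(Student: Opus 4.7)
The plan is to reduce each of the five statements to a combination of three ingredients: the uniform eigenfunction rate $\|a_n f_n - f\|_{C(\calX)} \leq C n^{-1/2}$ from \cref{cor:efuniformconv} (valid uniformly in $\omega$), the Banach-space strong law of large numbers \cref{thm:LLN}, and the level-set convergence result \cref{thm:cuevathm}. The order-independence of the limits comes almost for free: because the $n$-rate is deterministic and uniform in $\omega$, the $n$-error and the Monte Carlo error decouple in every triangle inequality that appears below.

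For \eqref{eq:efconv} the split
\[
\|E_M[a_n f_n]-\bbE[f]\|_{C(\calX)}
\leq
\frac{1}{M}\sum_{m=1}^M\|a_n(\omega_m)f_n(\omega_m)-f(\omega_m)\|_{C(\calX)}
+\|E_M[f]-\bbE[f]\|_{C(\calX)}
\]
bounds the first term by $C n^{-1/2}$ via \cref{cor:efuniformconv} (uniformly in $M$) and kills the second by \cref{thm:LLN} applied in the Banach space $C(\calX)$. The misclustering rate \eqref{eq:misclrate} is the ordinary real-valued strong law of large numbers, since $|A_{X_n}\triangle A_{X_n|\pi_n(\cdot)}|$ is bounded by $n$ for fixed $n$. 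For the coverage function \eqref{eq:covconf}, combining \cref{cor:efuniformconv} with \cref{thm:cuevathm} first yields $\bbP_\calX\bigl(A_{\calX|\pi_n(\omega)}\triangle A_{\calX(\omega)}\bigr)\to 0$ for $\bbP_\Omega$-a.e.\ $\omega$; an analogous triangle-inequality decomposition then handles the indicator piece (bounded by $1$, so dominated convergence applies) while \cref{thm:LLN} in $L^1_{\bbP_\calX}(\calX)$ handles the Monte Carlo piece, and a Fubini argument recovers the product-measure norm. The spectral expectation \eqref{eq:specconv} follows by feeding \eqref{eq:efconv} into \cref{thm:cuevathm} via the embedding $C(\calX)\hookrightarrow L^p_{\bbP_\calX}(\calX)$, using the non-degeneracy condition on the zero set of $\bbE[f]$.

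The main obstacle is the ODF expectation \eqref{eq:ODFconv}. Invoking \cref{thm:cuevathm} requires $L^p_{\bbP_\calX}$-convergence of $E_M[b_{A_{\calX|\pi_n(\cdot)}}]$ to $\bbE[b_{A_{\calX(\cdot)}}]$, which in turn demands $b_{A_{\calX|\pi_n(\omega)}}\to b_{A_{\calX(\omega)}}$ in a usable function-space sense for $\bbP_\Omega$-a.e.\ $\omega$. Mere symmetric-difference convergence of the underlying sets is too weak, since spatially dispersed small discrepancies can shift the distance functions significantly at points far from the set boundary. I would upgrade the argument by working directly with the uniform eigenfunction convergence from \cref{cor:efuniformconv}: combined with non-degeneracy of the zero set of $f$, this yields Hausdorff convergence of the superlevel sets $\{a_n f_n\geq 0\}\to\{f\geq 0\}$ and hence uniform convergence of $d_A$, $d_{\overline{A}}$, and $b_A$. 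The resulting uniform bound, together with the a priori bound $\|b_A\|_\infty\leq\operatorname{diam}(\calX)$, lets \cref{thm:LLN} transfer the convergence through the Monte Carlo average and allows \cref{thm:cuevathm} to close the argument. A minor reading point is that the non-degeneracy hypotheses stated in terms of $\bbP_\Omega$ are naturally $\bbP_\calX$-measure conditions on zero level sets in $\calX$ and should be interpreted as such.
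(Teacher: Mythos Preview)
Your strategy matches the paper's: decompose each error into an $n$-term controlled by the uniform rate in \cref{cor:efuniformconv} and an $M$-term controlled by \cref{thm:LLN}, then feed the resulting function-space convergence into \cref{thm:cuevathm} for the set-valued statements; \eqref{eq:efconv}, \eqref{eq:specconv}, and \eqref{eq:misclrate} are handled identically. Two differences deserve comment. For \eqref{eq:covconf} the paper does not invoke \cref{thm:cuevathm} at all; instead it uses the elementary inclusion $\{a_n f_n\geq 0\}\triangle\{f\geq 0\}\subset\{|f|\leq Cn^{-1/2}\}$, an immediate consequence of the uniform bound in \cref{cor:efuniformconv}, which gives a quantitative estimate on the indicator difference in $L^1_{\bbP_\Omega\otimes\bbP_\calX}$ without any level-set machinery or per-$\omega$ non-degeneracy hypothesis. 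For \eqref{eq:ODFconv} the paper's argument is terse and, as written, passes through $g_{n,M}=b_{\{E_M[a_nf_n]\geq 0\}}$ rather than through $E_M[b_{A_{\calX|\pi_n(\cdot)}}]$, which is the quantity actually defining the empirical ODF estimator; your route via Hausdorff convergence of the superlevel sets (deduced from uniform eigenfunction convergence plus zero-set non-degeneracy) to uniform convergence of the oriented distance functions, followed by \cref{thm:LLN} and \cref{thm:cuevathm}, is the more careful argument and supplies what the paper leaves implicit. Your closing remark on reading the non-degeneracy hypotheses as $\bbP_\calX$-measure conditions on level sets in $\calX$ is also well taken.
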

	\begin{remark}
		For clarity we recall the dependencies of the quantities interest considered in \cref{thm:consistency}. On the one hand, the clustering $A_{\calX(\omega)}\subset\calX$ given through $f(\omega)\colon\calX\to\bbR$ depends on the random perturbations modelled through $\omega\in\Omega$ with measure $\bbP_\Omega$. On the other hand, the approximated quantities have an additional dependence on the data points $\{x_i\}_{i\in\bbN}$, cf.\ \cref{ass:VBB2008GA}, each of them being an element of $\calX$ and having measure $\bbP_\calX$. The $\bbP_\calX$- and $\bbP_\calX\otimes\bbP_\Omega$-almost sure statements are to be considered with respect to the sampled data points and the random perturbations.
	\end{remark}
	\begin{proof}[Proof of \cref{thm:consistency}]
		\begin{description}
			\item[\Cref{eq:efconv}:]
			\Cref{thm:LLN} and \cref{cor:efuniformconv} imply
			\begin{align*}
			\|E_M[a_nf_n]-\bbE[f]\|_{C(\calX)}
			&\leq
			\|E_M[a_nf_n]-E_M[f]\|_{C(\calX)}
			+
			\|E_M[f]-\bbE[f]\|_{C(\calX)}
			\\
			&\leq
			\frac{1}{M}\sum_{m=1}^M\|a_n(\omega_i)f_n(\omega_i)-f(\omega_i)\|_{C(\calX)}\\
			&\qquad\qquad\qquad\qquad\qquad\qquad\,\,\,
			+\|E_M[f]-\bbE[f]\|_{C(\calX)}\\
			&\leq
			C\big(n^{-1/2}+M^{-1/2}\big),
			\end{align*}
			$\bbP_\calX$- and $\bbP_\Omega$-almost surely, from which the assertion follows.
			\item[\Cref{eq:covconf}:] We write
			\begin{align*}
				&\big\|E_M[\mathbbm{1}_{A_{\calX|\pi_n(\cdot)}}]-\bbE[\mathbbm{1}_{A_{\calX(\cdot)}}]\big\|_{L_{\bbP_\Omega\otimes\bbP_\calX}^1(\Omega\times\calX)}\\
				&\qquad\leq
				\underbrace{\big\|E_M[\mathbbm{1}_{A_{\calX|\pi_n(\cdot)}}]-E_M[\mathbbm{1}_{A_{\calX(\cdot)}}]\big\|_{L_{\bbP_\Omega\otimes\bbP_\calX}^1(\Omega\times\calX)}}_{=:(\spadesuit)}\\
				&\qquad\qquad\qquad+
				\underbrace{\big\|E_M[\mathbbm{1}_{A_{\calX(\cdot)}}]-\bbE[\mathbbm{1}_{A_{\calX(\cdot)}}]\big\|_{L_{\bbP_\Omega\otimes\bbP_\calX}^1(\Omega\times\calX)}}_{=:(\clubsuit)}
			\end{align*}
			and estimate
			\begin{align*}
				(\spadesuit)
				&\leq
				\frac{1}{M}\sum_{m=1}^M\big\|\mathbbm{1}_{A_{\calX|\pi_n(\omega_m)}}-\mathbbm{1}_{A_{\calX(\omega_m)}}\big\|_{L_{\bbP_\Omega\otimes\bbP_\calX}^1(\Omega\times\calX)}\\
				&=
				\int_{\Omega}\int_\calX\big|\mathbbm{1}_{A_{\calX|\pi_n(\omega_m)}}-\mathbbm{1}_{A_{\calX(\omega_m)}}\big|\dd\,(\bbP_\Omega\otimes\bbP_\calX)\\
				&=
				(\bbP_\Omega\otimes\bbP_\calX)\big(\{a_nf_n(\omega_m)\geq 0\}\triangle\{f(\omega_m)\geq 0\}\big)
			\end{align*}
			Now, \cref{cor:efuniformconv} implies
			\begin{align*}
				\{a_n(\omega_m)f_n(\omega_m)\geq 0\}
				&=
				\{a_n(\omega_m)f_n(\omega_m)-f(\omega_m)+f(\omega_m)\geq 0\}\\
				&=
				\{f(\omega_m)\leq f(\omega_m)-a_n(\omega_m)f_n(\omega_m)\}\\
				&\subset
				\{f(\omega_m)\leq Cn^{-1/2}\}
			\end{align*}
			and, likewise,
			\[
				\{a_n(\omega_m)f_n(\omega_m)\leq 0\}
				\subset
				\{f(\omega_m)\geq -Cn^{-1/2}\}
			\]
			with a constant $C$ which is independent of $n$ and $m$, which yields
			\[
			\{a_n(\omega_m)f_n(\omega_m)\geq 0\}\triangle\{f(\omega_m)\geq 0\}
			\subset
			\{|f(\omega_m)|\leq Cn^{-1/2}\}.
			\]
			This yields
			\begin{align*}
			(\spadesuit)
			&\leq
			(\bbP_\Omega\otimes\bbP_\calX)\big(\{a_nf_n(\omega_m)\geq 0\}\triangle\{f(\omega_m)\geq 0\}\big)\\
			&\leq
			(\bbP_\Omega\otimes\bbP_\calX)\big(\{|f(\omega_m)|\leq Cn^{-1/2}\}\big)\\
			&\overset{n\to0}{\longrightarrow}
			0,
			\end{align*}
			independently of $m$. Exploiting $(\bbP_\Omega\otimes\bbP_\calX)(\Omega\times\calX)=1$ and referring to the standard Monte Carlo convergence rate of Hilbert space-valued random variables in $L^2$, we obtain
			\begin{align*}
				(\clubsuit)
				&\leq
				\big\|E_M[\mathbbm{1}_{A_{\calX(\omega_m)}}]-\bbE[\mathbbm{1}_{A_{\calX(\omega_m)}}]\big\|_{L_{\bbP_\Omega\otimes\bbP_\calX}^2(\Omega\times\calX)}\\
				&\leq
				M^{-1/2}\big\|\mathbbm{1}_{A_{\calX(\omega_m)}}\big\|_{L_{\bbP_\Omega\otimes\bbP_\calX}^2(\Omega\times\calX)}\\
				&\leq
				M^{-1/2}.
			\end{align*}
			This yields the assertion.
			\item[\Cref{eq:ODFconv}:] \Cref{eq:efconv} implies that $g_{n,M}=b_{\{E_M[a_nf_n]\geq 0\}}$ converges to $g=b_{\{\bbE[f]\geq 0\}}$ in $L_{\bbP_\calX}^{\infty}(\calX)$, i.e.,
			\[
			\|g_{n,M}-g\|_{L_{\bbP_\calX}^{\infty}(\calX)}
			\overset{n,M\to\infty}{\longrightarrow}
			0,
			\]
			$\bbP_\calX$- and $\bbP_\Omega$-almost surely. The assertion follows from \cref{thm:cuevathm}.
			\item[\Cref{eq:specconv}:] Follows likewise from \cref{thm:cuevathm}, but with $g_{n,M}=E_M[a_nf_n]$ and $g=\bbE[f]$.
			\item[\Cref{eq:misclrate}:] Follows directly from \cref{thm:LLN}.
		\end{description}
	\end{proof}
	
	\section{Numerical experiments}\label{sec:experiments}
	
	\subsection{Computational setup}
	The following numerical examples have been implemented within \verb|python| with a similarity function $k\colon\bbR^d\times\bbR^d\to\bbR$ given through $k(x,y)=\exp(\|x-y\|_{\bbR^d}^2/(2\sigma^2))$,
	where $\sigma$ is in the order of magnitude of the largest edge in a minimal spanning tree on the data set computed with the \verb|NetworkX| library \cite{HSS2008c}. Otherwise, all computations have been implemented as described in \cref{sec:preliminaries} to \cref{sec:mc}. All computations have been carried out on a single core of a Intel Xeon Gold 6136 CPU with 3GHz.

	\subsection{Point cloud in circle}\label{sec:pcic}
	
	For our first example we generate a reference data set with a total of $n=3m\in\{400,1600\}$ artificial data points in $\bbR^2$ in two clusters. The first cluster is given by $m$ points sampled according to a standard Gaussian distribution $\calN(\mathbf{0},\bfI)$, whereas the $2m$ data points from the second cluster are sampled according to $(r\cos(\phi),r\sin(\phi))$ with $r\sim\calN(2.5,0.25)$ and $\phi\sim\calU([0,2\pi))$. The sampled reference data set is illustrated in \cref{fig:pcicillu}.
	\begin{figure}
		\centering
		\includegraphics[width=0.4\textwidth]{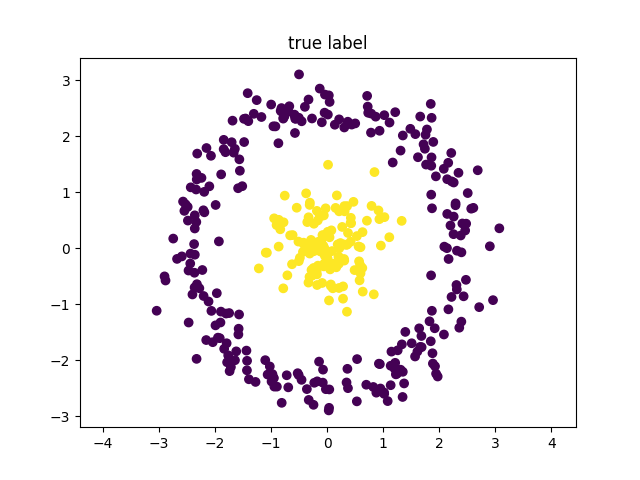}
		\includegraphics[width=0.4\textwidth]{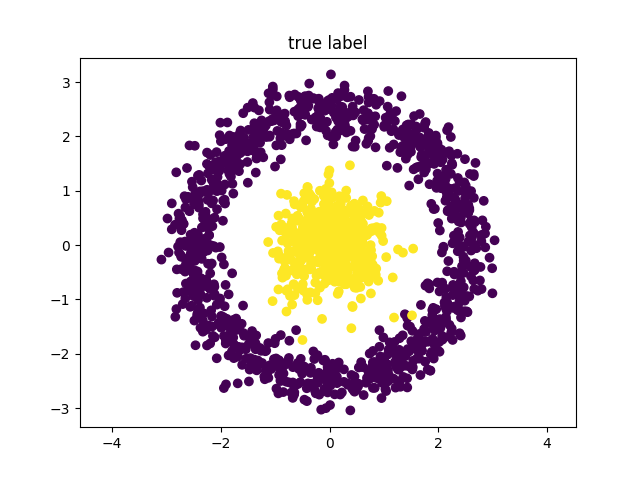}
		\caption{Sampled reference data sets of the point cloud in circle data set for $n=400$ (left) and $n=1600$ (right).}
		\label{fig:pcicillu}
	\end{figure}
	From these $n$ points, we select, delete, and regenerate in each cluster a $\calU([1\%,7\%])$-random percentage of randomly selected points from the same distributions and perturb the remaining ones by additive Gaussian noise with distribution $\calN(\mathbf{0},\varepsilon^2\bfI)$, $\varepsilon=\frac{2^k}{10}$, $k\in\{0,\pm1,\pm2\}$.
	
	For the computation of the expectations we use $10^4$ Monte Carlo samples, which takes roughly a day for the data set with $n=400$ points and four days for the data set with $n=1600$ points. The obtained expected misclustering rate is illustrated in \cref{fig:pcicmisc}, whereas the corresponding coverage functions, Vorob'ev expectation, ODF-expectation, and spectral expectation are depicted in \cref{fig:pcipexp400} for $n=400$ and \cref{fig:pcipexp1600} for $n=1600$.
	\begin{figure}
		\centering
		\includegraphics[width=0.4\textwidth]{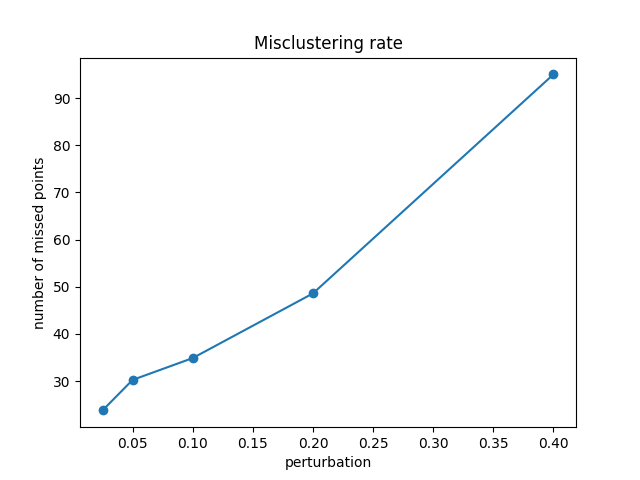}
		\includegraphics[width=0.4\textwidth]{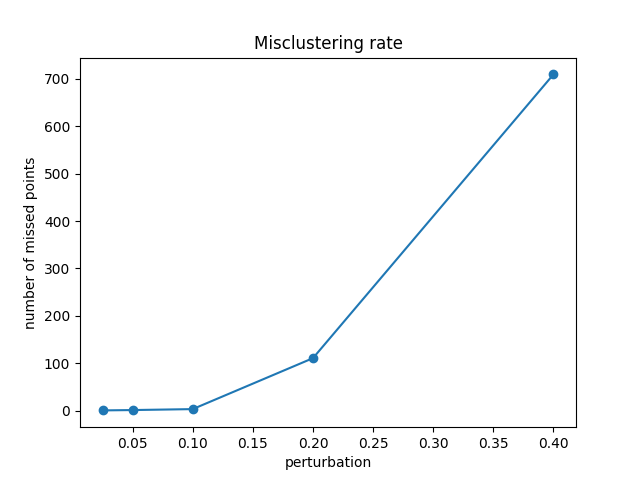}
		\caption{Expected misclustering rate for the point cloud in circle data set for $n=400$ (left) and $n=1600$ (right).}
		\label{fig:pcicmisc}
	\end{figure}
	\begin{figure}
		\centering
		\includegraphics[width=\textwidth]{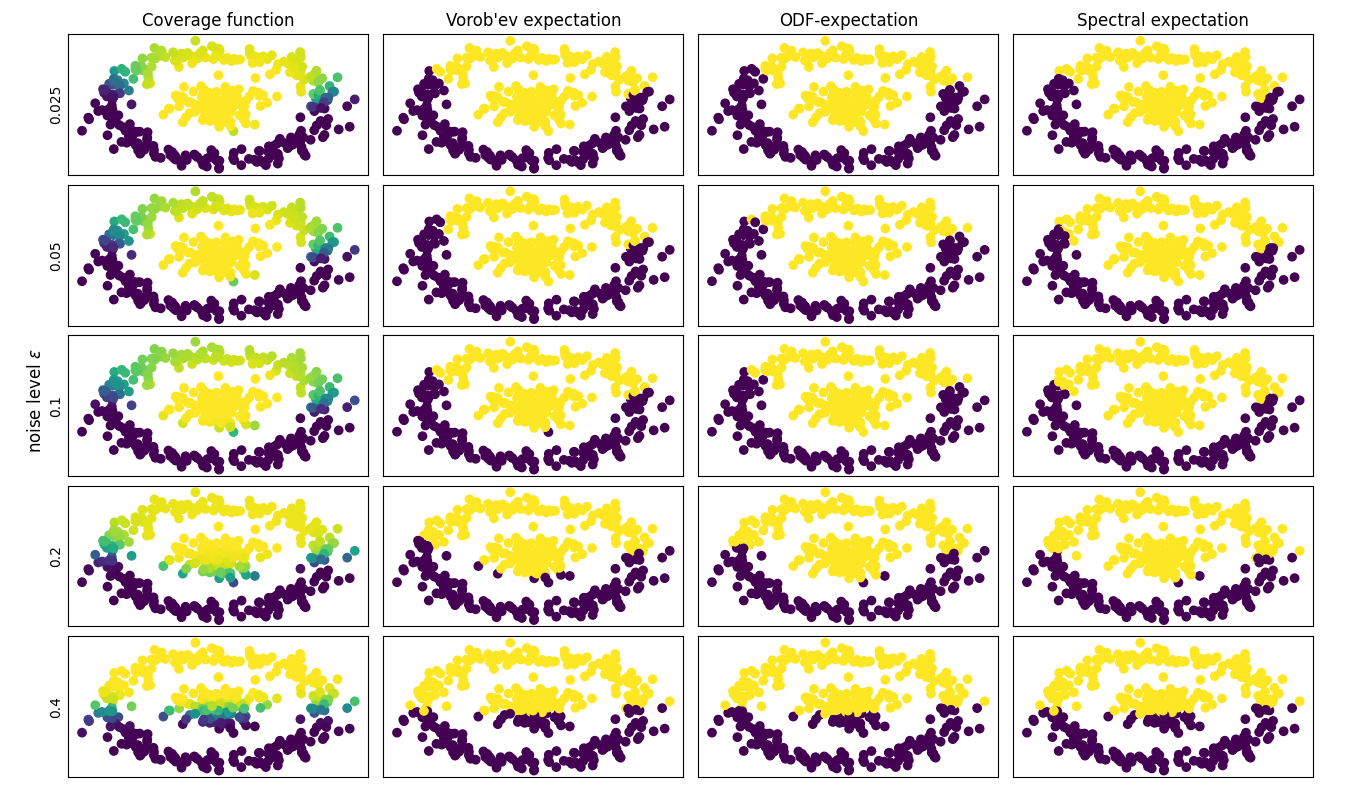}
		\caption{Approximated coverage function, Vorob'ev expectation, ODF-expectation, and spectral expectation for different noise levels for the point in circle data set with $n=400$.}
		\label{fig:pcipexp400}
	\end{figure}
	\begin{figure}
	\centering
	\includegraphics[width=\textwidth]{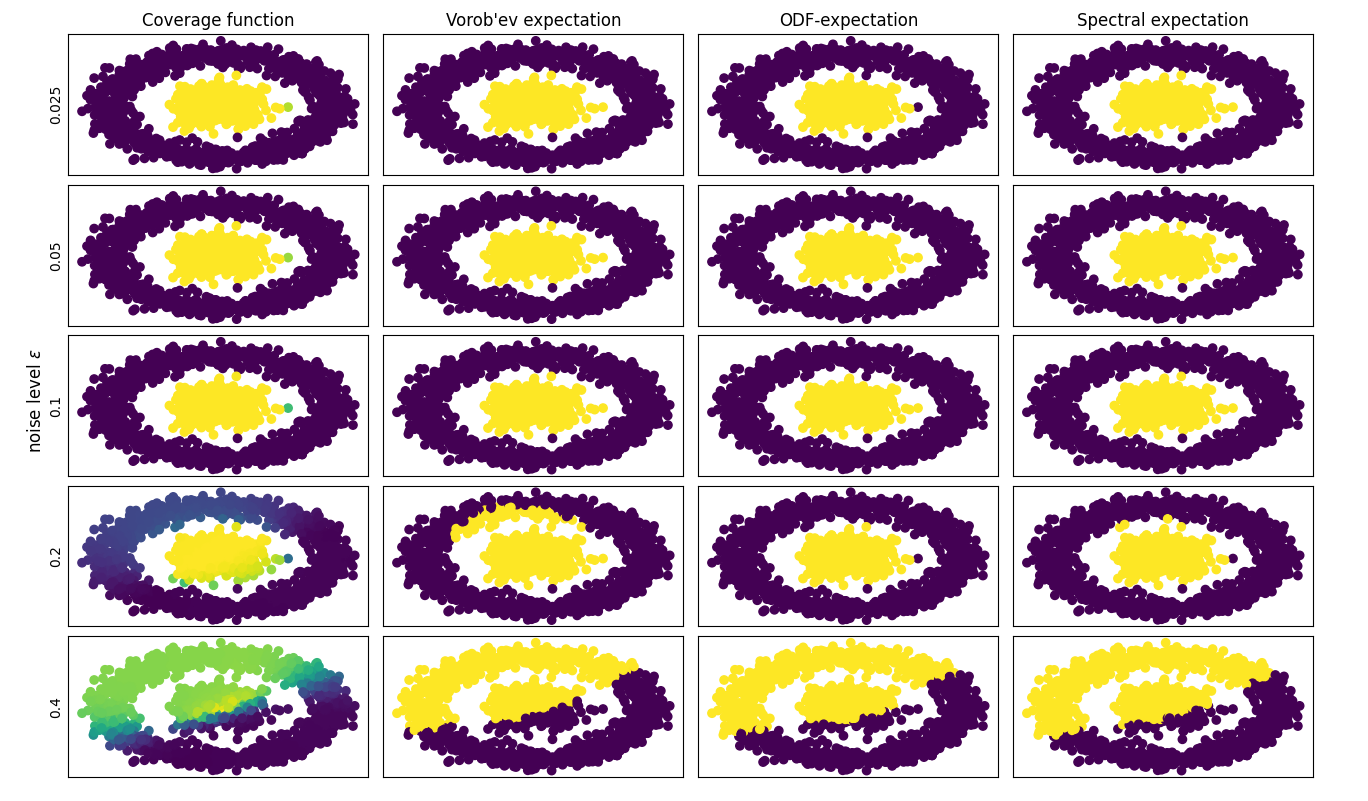}
	\caption{Approximated coverage function, Vorob'ev expectation, ODF-expectation, and spectral expectation for different noise levels for the point in circle data set with $n=1600$.}
	\label{fig:pcipexp1600}
	\end{figure}
	From the figures it is visible that, as to be expected, the larger the noise level of the data point perturbations, the larger the changes to the expected clustering. On the other hand, comparing the $n=400$ to the $n=1600$ case, it seems that the data set with more points is more stable to perturbations in the data. If the perturbations in the data is too large, the statistically expected clustering has almost no relation to the clustering of the reference data set.

	\subsection{Entangled half circles}\label{sec:ehc}
	For the second example we generate reference data set with $n=2m\in\{400,1600\}$ data points in $\bbR^2$ in two clusters. The first cluster corresponds to $n$ points $(x,y)$ with $x\sim\calU([0,\pi))$ and $y= 0.1-1.3 \cdot \sin(x) + z_1$ with $z_1\sim\calN(0,0.2)$. The second cluster corresponds to $n$ points $(x,y)$ with $x\sim\calU([0.4 \pi, 1.4\pi))$ and $y= 1.3 \cdot \sin (x-0.4\cdot \pi ) + z_2$ and $z_2\sim\calN(0,0.2)$. An illustration of the sampled reference data set is illustrated in \cref{fig:hcillu}.
		\begin{figure}
		\centering
		\includegraphics[width=0.4\textwidth]{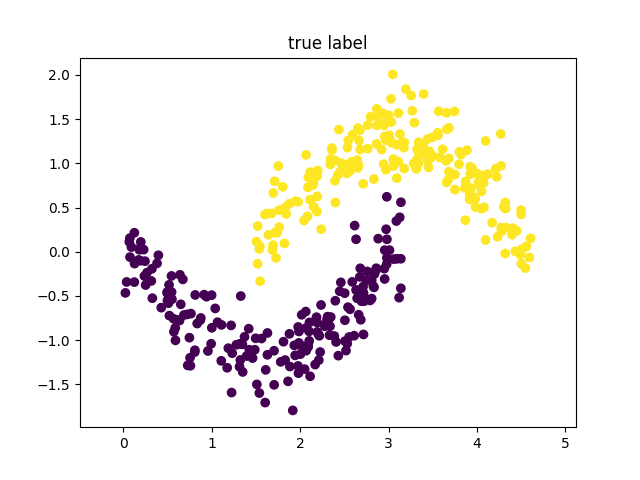}
		\includegraphics[width=0.4\textwidth]{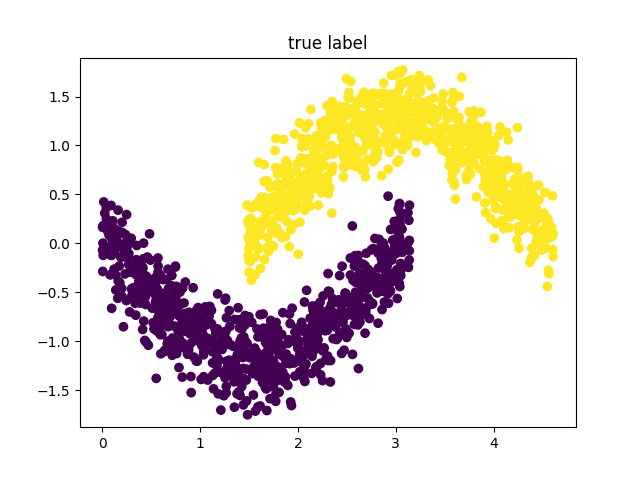}
		\caption{Sampled reference data sets of the point cloud in circle data set for $n=400$ (left) and $n=1600$ (right).}
		\label{fig:hcillu}
	\end{figure}
	The perturbed data is generated in analogy to the point in circle example in \cref{sec:pcic}. The runtime is also similar to the point sets in \cref{sec:pcic}, with $10^4$ samples taken for the Monte Carlo sampling.
	
	The obtained expected misclustering rate is illustrated in \cref{fig:hcmisc}, whereas the corresponding coverage functions, Vorob'ev expectation, ODF-expectation, and spectral expectation are depicted in \cref{fig:hcexp400} for $n=400$ and in \cref{fig:hcexp1600} for $n=1600$.
	\begin{figure}
		\centering
		\includegraphics[width=0.4\textwidth]{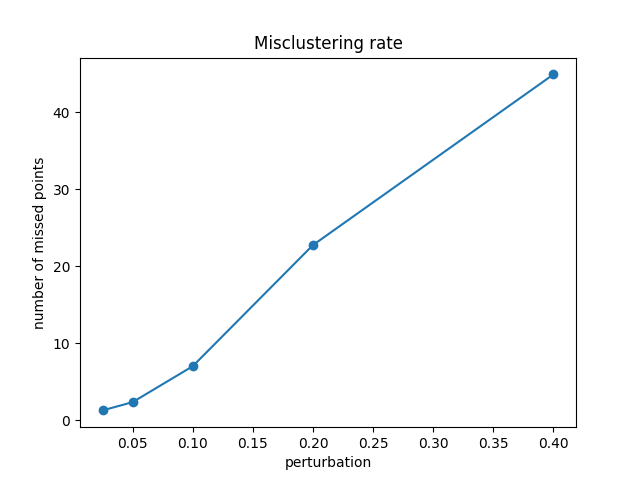}
		\includegraphics[width=0.4\textwidth]{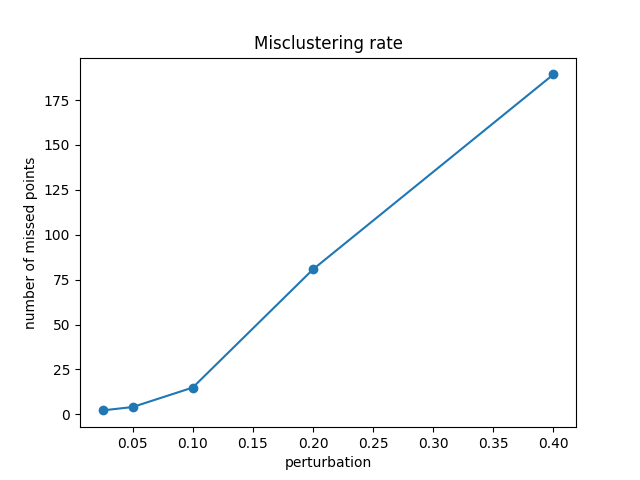}
		\caption{Expected misclustering rate for the entangled half circles data set for $n=400$ (left) and $n=1600$ (right).}
		\label{fig:hcmisc}
	\end{figure}
	\begin{figure}
		\centering
		\includegraphics[width=\textwidth]{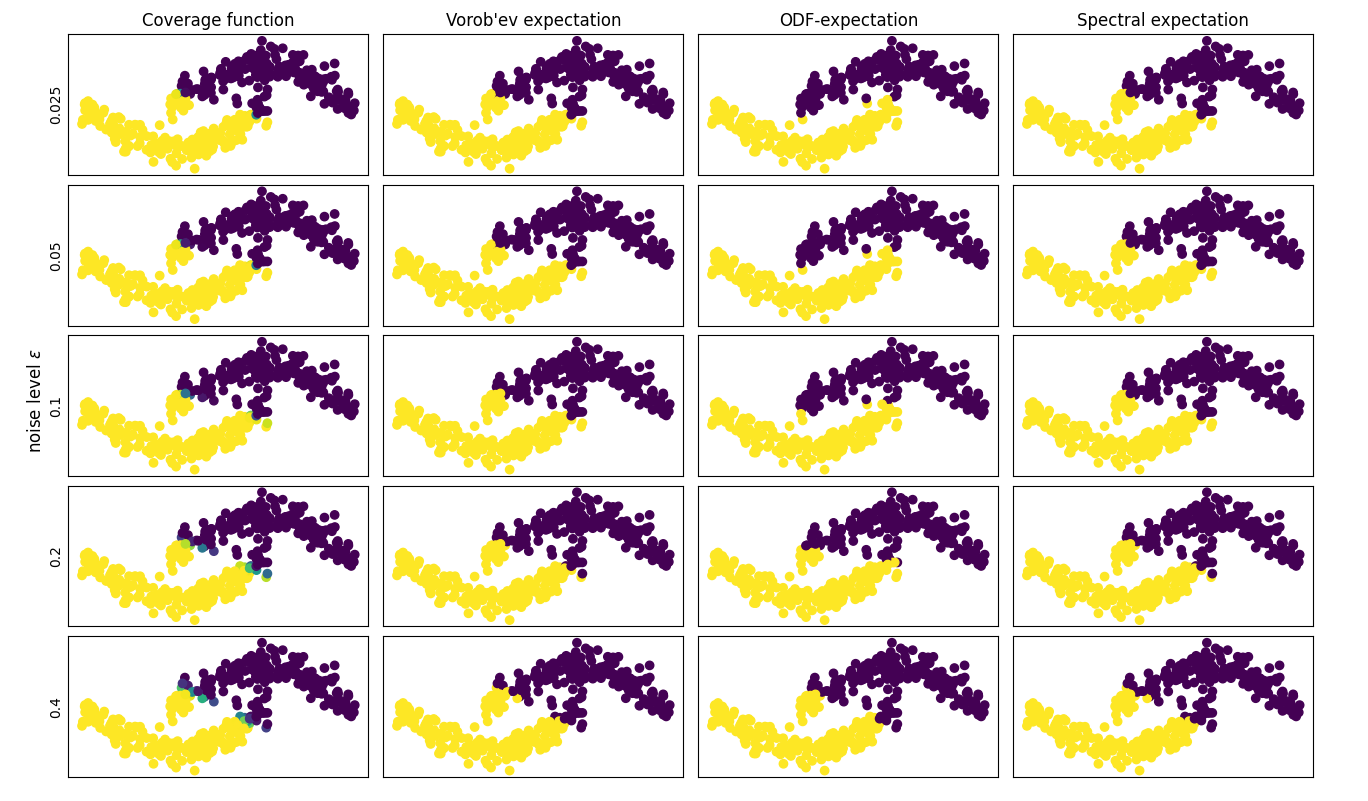}
		\caption{Approximated coverage function, Vorob'ev expectation, ODF-expectation, and spectral expectation for different noise levels for the entangled half circles set with $n=400$.}
		\label{fig:hcexp400}
	\end{figure}
	\begin{figure}
		\centering
		\includegraphics[width=\textwidth]{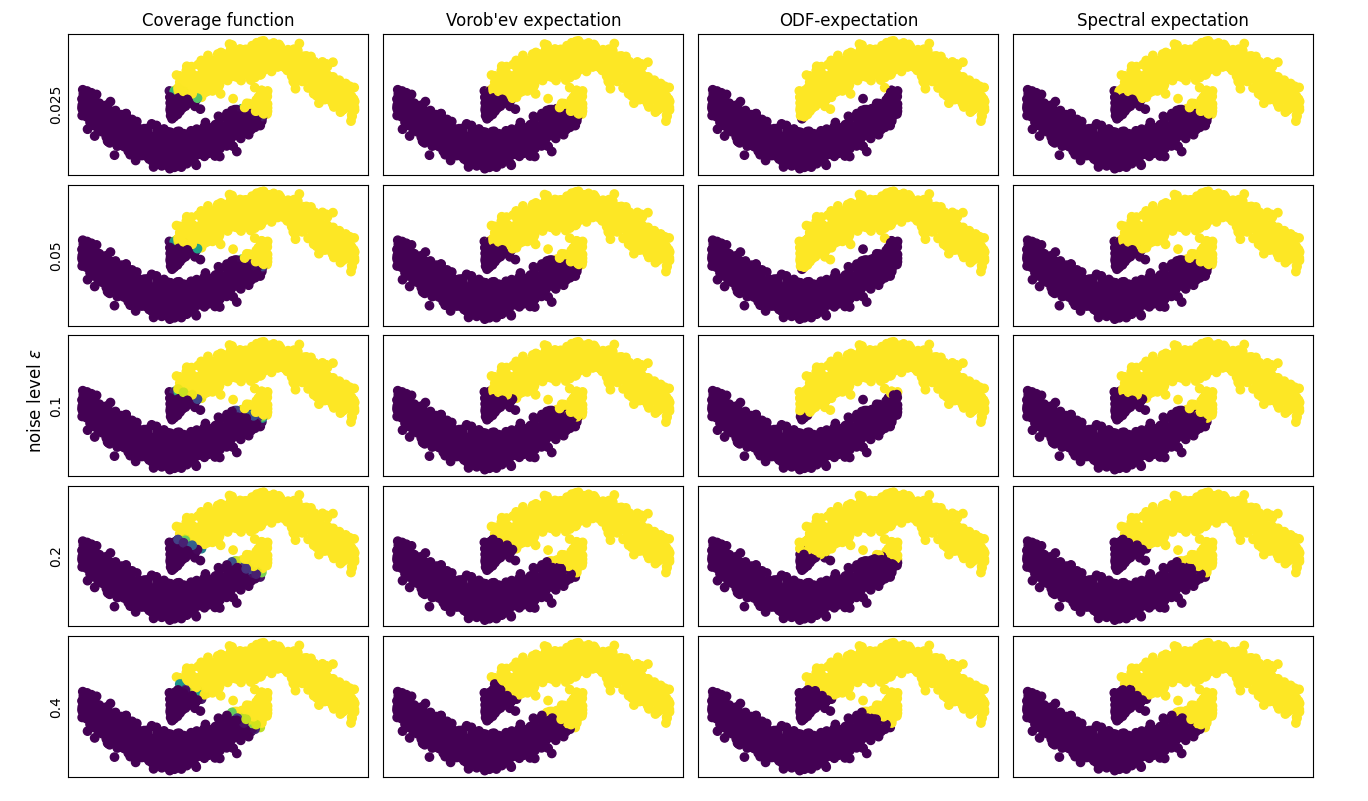}
		\caption{Approximated coverage function, Vorob'ev expectation, ODF-expectation, and spectral expectation for different noise levels for the entangled half circles data set with $n=1600$.}
		\label{fig:hcexp1600}
	\end{figure}
	The illustrations generally confirm the impressions from the first example from \cref{sec:pcic}.

	\subsection{Wisconsin breast cancer data set}
	For our final example we consider the Wisconsin breast cancer data set \cite{SWM1993} which is freely available within the library \verb+scikit-learn+ \cite{PVG+2011}. It contains 569 data points in a 30 dimensional space which are classified into malignant and benign. We normalize it in each of the 30 dimensions and perturb it with additive noise as in the previous two examples.
	Considering the set has 569 points, the runtime is comparable to the previous two examples.
	
	For visualization purposes, we project the data set into the coordinate system of the leading two directions of the data set obtained by PCA. An illustration of the original clustering as well as the misclustering rate under data perturbation is given in \cref{fig:wisconsin}.
	\begin{figure}
		\centering
		\includegraphics[width=0.4\textwidth]{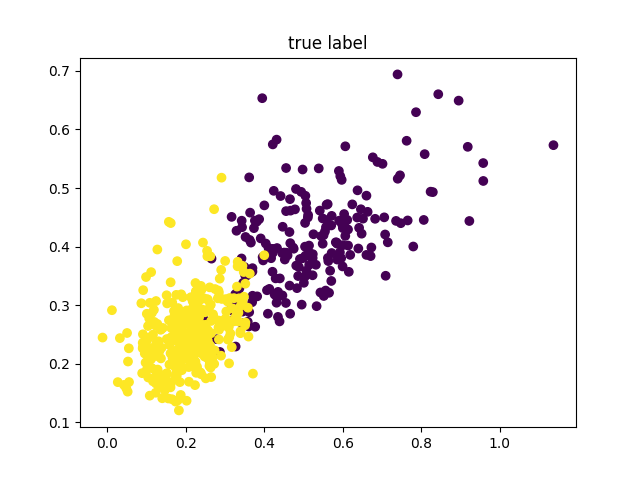}
		\includegraphics[width=0.4\textwidth]{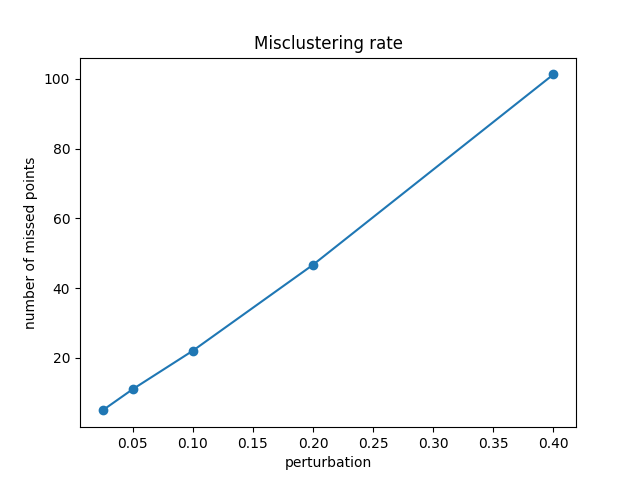}
		\caption{Normalized 2d projection of the Wisconsin breast cancer data set as well as the obtained misclustering noise under various additive noise levels.}
		\label{fig:wisconsin}
	\end{figure}
	The coverage functions, Vorob'ev expectation, ODF-expectation, and spectral expectation are depicted in \cref{fig:wisconsinexp}.
	
	\begin{figure}[p]
	\centering
	\begin{subfigure}[b]{\textwidth}
		\centering
		\includegraphics[width=\linewidth]{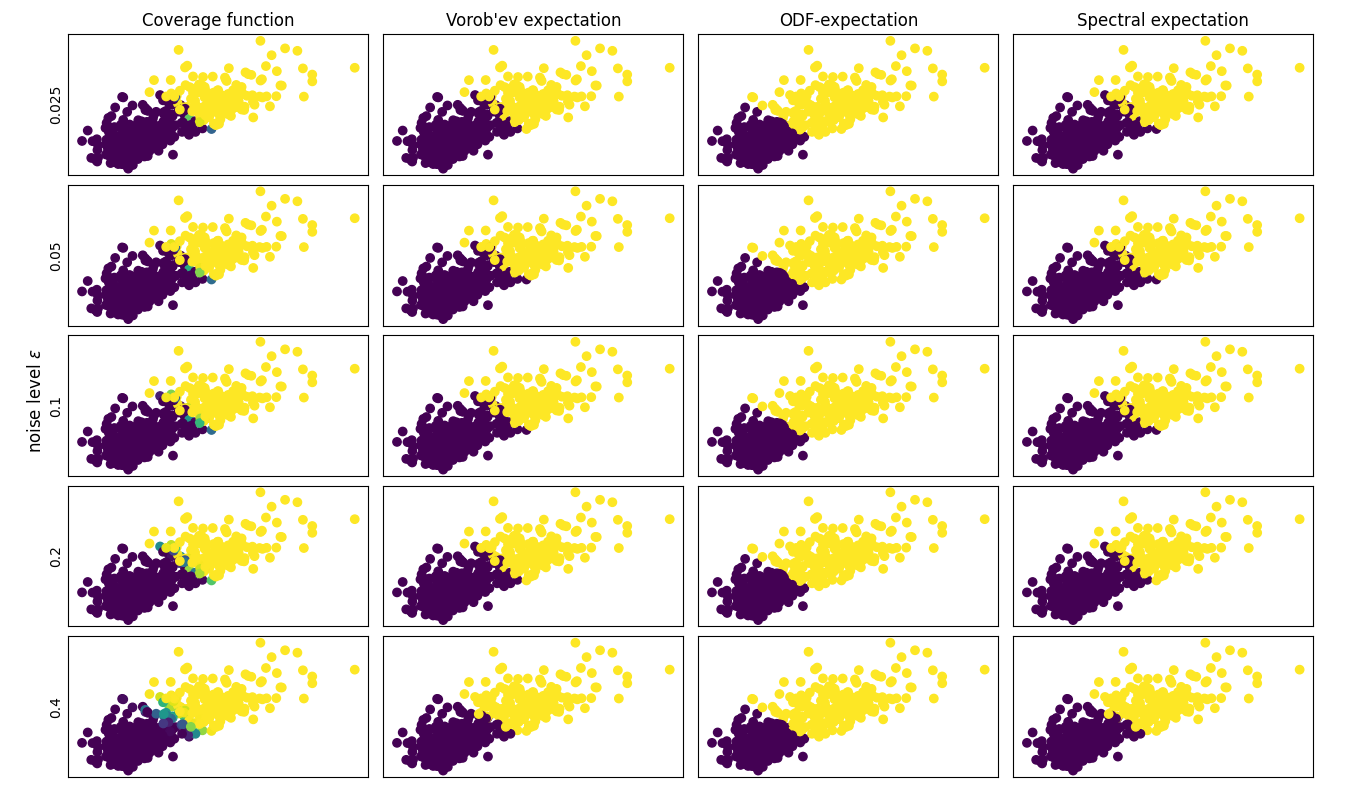}
		\caption{Obtained quantities as obtained from the algorithm.}
	\end{subfigure}
	\vspace{0.5em} %
	\begin{subfigure}[b]{\textwidth}
		\centering
		\includegraphics[width=\linewidth]{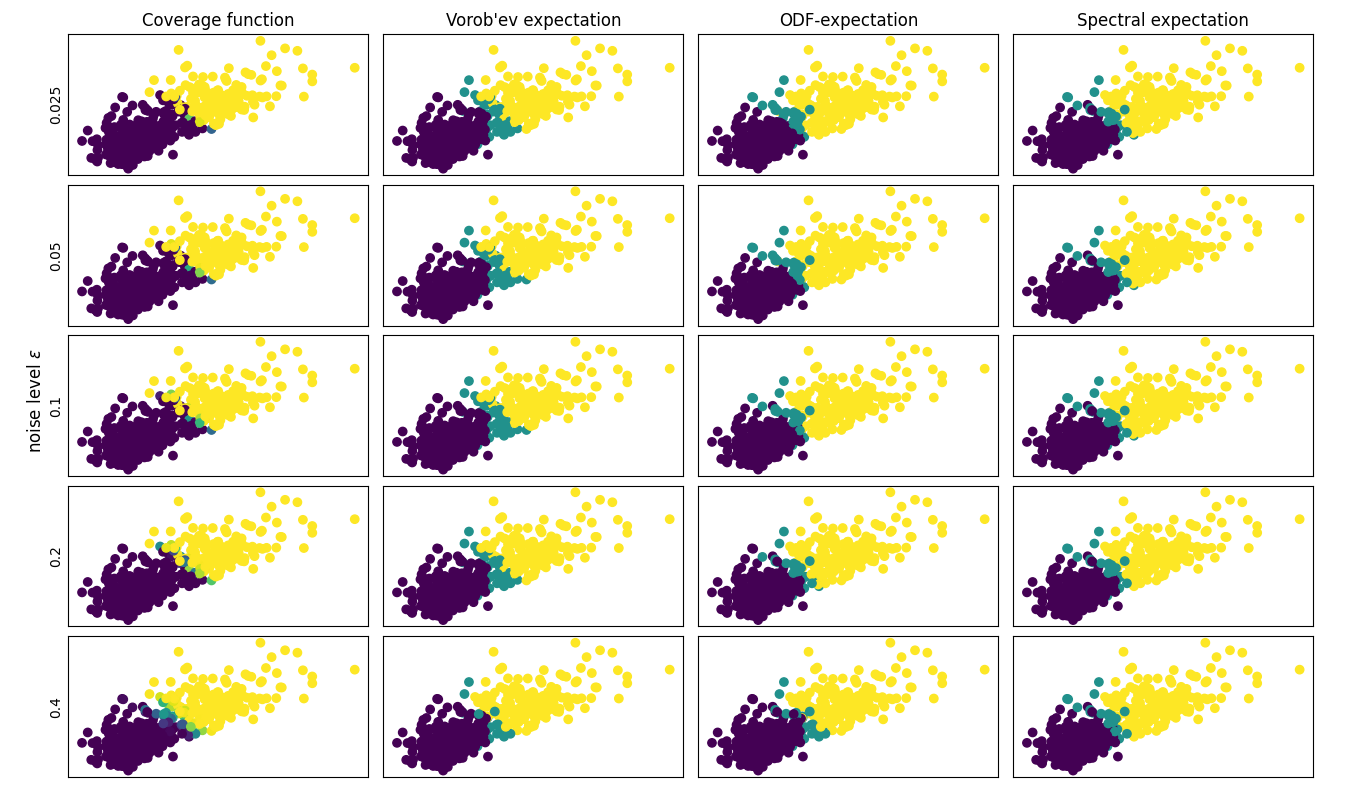}
		\caption{Obtained quantities with deviations from the original clustering, cf.\ \cref{fig:wisconsin}, shown in green.}
		\label{fig:brustkrebs_neu}
	\end{subfigure}
	\caption{Normalized 2d projections of the approximated coverage function, Vorob'ev expectation, ODF-expectation, and spectral expectation for different noise levels for the Wisconsin breast cancer data set.}
	\label{fig:wisconsinexp}
	\end{figure}

	\FloatBarrier %
	\section{Conclusion}\label{sec:concl}
	We propose an approach to compute expectations of spectral clusterings arising from corrupted data. Adopting an Eulerian approach, the corrupted data can arise from the ground truth due to perturbations in measurements, missing measurements, and even additional data points. Considering the arising clusters as random sets, we use random set theory to approximate the coverage function, the expected misclustering rate, the Vorob'ev expectation, the ODF-expectation by Monte Carlo approaches. We propose the spectral expectation as a natural expectation for clusterings in the context of spectral clusterings and Monte Carlo estimation. We provide a consistency analysis towards the infinite data point and infinite Monte Carlo sample for the continuous analog of spectral clustering, the coverage function, the ODF-expectation, and the spectral expectation in the case of a Gaussian similarity function. The analysis shows that it is irrelevant in which order the limits are taken and could be extended to other similarity functions as long as a corresponding version of \cref{cor:efuniformconv} can be shown. For a consistency analysis of the Vorob'ev expectation, a promising strategy could be to adopt the perspective of \cite{CGBM2013}. The numerical experiments show that for the toy data sets from \cref{sec:pcic} and \cref{sec:ehc} the ODF-expectation seems to be more stable to larger perturbation in the data, whereas for the breast cancer data set it seems to be Vorob'ev expectation. A detailed investigation thereof is left as future work. Likewise, using tools from high-dimensional approximation and recent progress in the uncertainty quantification of eigenvalue problems is a promising topic to reduce the computational burden of Monte Carlo sampling.
	
	\section{Acknowledgements}
	The authors would like to express their gratitude to Jochen Garcke for raising their interest into spectral clustering. The work of the authors was partially supported by the Deutsche Forschungsgemeinschaft (DFG, German Research Foundation), project 501419255. The authors also received support from the DFG under Germany’s Excellence Strategy, project 390685813.
	
	\bibliographystyle{plain}
	\bibliography{references}

\end{document}